
\documentclass[letterpaper, 10 pt, journal, twoside]{IEEEtran}              
\usepackage{amssymb}
\usepackage{bbding}
\usepackage{xcolor}
\usepackage{amsmath} 
\interdisplaylinepenalty=2500 
\usepackage{amssymb}
\usepackage{amsfonts}

\usepackage{amsthm}
\usepackage{graphicx,url} 
\usepackage{bm}
\usepackage{float}
\usepackage{makecell}

\usepackage{enumitem}
\usepackage{subcaption}
\usepackage{wrapfig}
\usepackage{multirow}
\usepackage{diagbox}

\usepackage{cite}
\usepackage{soul}  
\usepackage[bookmarks=true]{hyperref}
\hypersetup{
    colorlinks=true,
    pdfborderstyle={/S/U/W 1},
    linkcolor=blue,
    filecolor=magenta,      
    urlcolor= blue,
    linkbordercolor=red,
}
\newcommand\rurl[1]{%
  \href{http://#1}{\nolinkurl{#1}}%
}

\usepackage{comment}



\def\black#1{{\color{black}#1}}

\let\abs\relax
\newcommand{\abs}[1]{\left\lvert#1\right\rvert}
\newcommand{\norm}[1]{\left\lVert#1\right\rVert}

\newcommand{\infnorm}[1]{\left\lVert#1\right\rVert_\infty}



\def\lone{{\mathcal{L}_1}}

\def\lonew{${\mathcal{L}_1}$ }

\def \loneAC {$\lone$AC}

\def\tilx{\tilde{x}}

\def\dotx{\dot x}
\def \hatx{\hat{x}}

\def \hsigma{\hat{\sigma}}

\def\mbR{\mathbb{R}}

\def\mbZ{\mathbb{Z}}

\def\mbZ{\mathbb{Z}}

\def\hsigma{\hat{\sigma}}

\def\mcX{\mathcal{X}}

\def\mcU{\mathcal{U}}

\def\trieq{\triangleq}

\newtheorem{lemma}{Lemma}

\theoremstyle{definition}  
\theoremstyle{definition} \newtheorem{assumption}{Assumption}
\theoremstyle{remark}  
\newtheorem{remark}{Remark}

\usepackage{accents}

\makeatletter
\def\cl@part {\@elt {chapter}}
\makeatother

\usepackage{cleveref}

\crefname{equation}{}{} 
\crefname{lemma}{Lemma}{Lemmas}
\crefname{theorem}{Theorem}{Theorems}
\crefname{table}{Table}{Tables}
\crefname{figure}{Fig.}{Figs.}
\crefname{remark}{Remark}{Remarks}
\crefname{assumption}{Assumption}{Assumptions}
\crefname{section}{Section}{Sections}
\crefname{definition}{Definition}{Definitions}
\crefname{algorithm}{Algorithm}{Algorithms}

\makeatletter
\renewcommand*\env@matrix[1][\arraystretch]{%
  \edef\arraystretch{#1}%
  \hskip -\arraycolsep
  \let\@ifnextchar\new@ifnextchar
  \array{*\c@MaxMatrixCols c}}
\makeatother

\def\hd{{\hat d}}
\def\mbZ{\mathbb{Z}}

\def\mcx{{\mathcal{X}}}

\def\mcU{{\mathcal{U}}}

\def \bbracket#1{\bm{[}#1\bm{]}}

\def\uRL{u_\textup{RL}} 

\def\uLone{u_{\mathcal L_1}} 
\begin{document}

\title{Improving the Robustness of Reinforcement Learning Policies  with \lonew Adaptive Control}

\author{Yikun Cheng$^\star$$^{1}$, Pan Zhao$^\star$$^{1}$, Fanxin Wang$^{1}$, Daniel J.~Block$^{2}$, Naira Hovakimyan$^{1}$
\thanks{
This work is supported by AFOSR and NSF under the RI grant \#2133656, NRI grant \#1830639, and AI Institute Planning grant \#2020289. {\it ($^\star$Yikun Cheng and Pan Zhao contributed equally to this work.) (Corresponding author: Pan~Zhao.)}}
\thanks{$^{1}$Yikun~Cheng, Pan~Zhao, Fanxin~Wang and Naira~Hovakimyan are with the Mechanical Science and Engineering
Department, University of Illinois at Urbana-Champaign, IL 61801, USA. Email: \texttt{\{yikun2, panzhao2, fanxinw2, nhovakim\}@illinois.edu}.}
\thanks{$^{2}$Daniel~J.~Block is with the Electrical and Computer Engineering Department, University of Illinois at Urbana-Champaign, IL 61801, USA. Email: \texttt{d-block@illinois.edu}.}
} 
\maketitle

\maketitle

\begin{abstract}
A reinforcement learning (RL) control policy could fail in a new/perturbed environment that is different from the training environment, due to the presence of dynamic variations. 
For controlling systems with continuous state and action spaces, we propose an add-on approach to robustifying a pre-trained RL policy by augmenting it with an \lonew adaptive controller (\loneAC). Leveraging the capability of an \loneAC~for fast estimation and active compensation of  dynamic variations, the proposed approach can improve the robustness of an RL policy which is trained  either in a simulator or in the real world without consideration of a broad class of 
dynamic variations. Numerical and real-world  experiments  empirically demonstrate the efficacy of the proposed approach in robustifying RL policies trained using both model-free and model-based methods. 
\end{abstract}

\begin{IEEEkeywords}
Reinforcement learning, robust control, non-stationary, disturbance observer, adaptive control
\end{IEEEkeywords}

\section{Introduction}\label{sec:intro}
Reinforcement learning (RL) is a promising way to solve sequential decision-making problems \cite{sutton2018RL-intro}. In the recent years, RL has shown impressive or superhuman performance in 
control of complex robotic systems \cite{kaufmann2018droneracing-rl,hwangbo2019agile-legged}. 
An RL policy is often trained in a simulator and deployed in the real world. However, the discrepancy between the simulated and the real environment, known as the sim-to-real (S2R) gap, often causes the RL policy to fail in the real world. An RL policy may also be directly trained in a real-world environment; however,  the environment perturbation resulting from parameter variations, actuator failures and external disturbances can still cause the well-trained policy to fail. Take a delivery drone for example (Fig.~\ref{fig:adaptive-RL}). We could train an RL policy to control the drone in a nominal environment (e.g.,  nominal load, mild wind disturbances, healthy propellers, etc.); however, this policy could fail and lead to a crash when the drone operates in a new environment (e.g., heavier loads, stronger wind disturbances, loss of propeller efficiency, etc.). To a certain extent, the S2R gap issue can be considered as a special case of environment perturbation by treating the simulated and real environments as the old/nominal and new/perturbed environments, respectively.

\begin{figure}
    \centering
    \includegraphics[width=0.9\columnwidth]{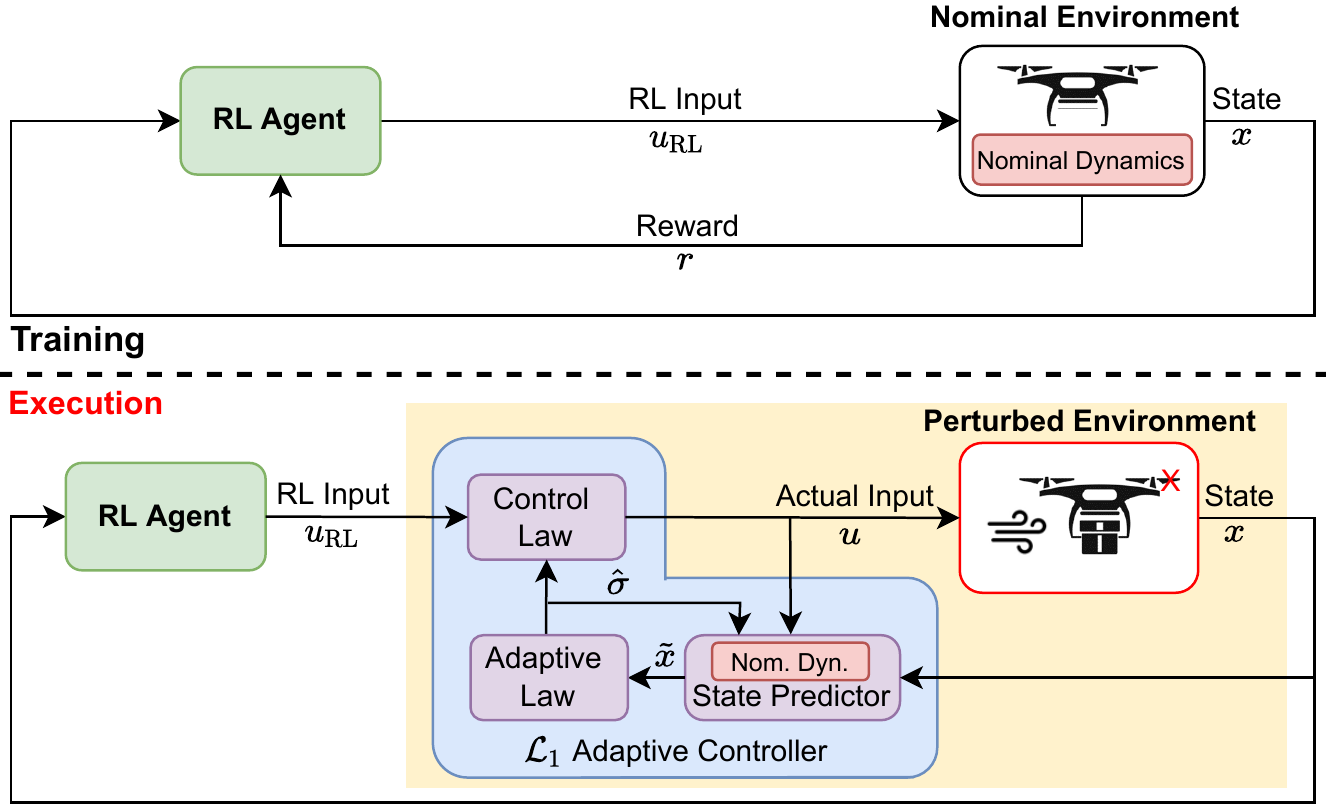}
    \caption{{Proposed approach to policy robustness improvement based on $\lone$ adaptive augmentation}}
    \label{fig:adaptive-RL}
    \vspace{-6mm}
\end{figure}
\subsection{Related work}
\noindent\textbf{Robust/adversarial training}: Domain/dynamics randomization was proposed to close the sim-to-real (S2R) gap \cite{tobin2017domainrand,peng2018sim2real,loquercio2019deep-drone-RL} when transferring a policy from a simulator to the real world. Robust adversarial training addresses the S2R gap and environment perturbations by formulating a two-player zero-sum game between the agent and the disturbance \cite{pinto2017robust-adversarial}. A similar idea was explored in \cite{abdullah2019wasserstein}, where Wasserstein distance was used to characterize the set of dynamics for which a robust policy was searched via solving a min-max problem. 

Though fairly general and applicable to a broad class of systems, these methods often involve tedious modifications to the training environment or the dynamics, which can only happen in a simulator. More importantly, the resulting {\it fixed} policies could overfit to the worst-case scenarios, and thus lead to conservative or degraded performance in other cases \cite{rice2020overfitting}.  

This issue is well studied in control community; more specifically, robust control \cite{Zhou98essentials} that  aims to provide performance guarantee for the worst-case scenario, often leads to conservative nominal performance.

\noindent{\black{\textbf{Post-training augmentation}}}:  Kim et al. \cite{kim2019rl-dob} proposed to use a disturbance observer (DOB) to improve the robustness of an RL policy, in which the mismatch between the simulated training environment and the testing environment is estimated as a disturbance and compensated for. A similar idea was pursued in \cite{guha2020mrac-rl}, which used a model reference adaptive control (MRAC) scheme to estimate and compensate for parametric uncertainties. Our objectives are similar to the ones in \cite{kim2019rl-dob} and \cite{guha2020mrac-rl}, but our approach and end results are different, as we address a broader class of dynamic uncertainties (e.g., unknown input gain that cannot be handled by \cite{kim2019rl-dob}, and time-dependent disturbances that cannot be handled by \cite{guha2020mrac-rl}), and we leverage the \lonew adaptive control architecture that is capable of providing guaranteed transient (instead of just asymptotic)  performance \cite{naira2010l1book}. Additionally, we validate our approach on real hardware, as opposed to merely in numerical simulations in   \cite{kim2019rl-dob,guha2020mrac-rl}. We note that \lonew adaptive control has been combined with model predictive control (MPC) with application to quadrotors \cite{ackerman2020l1-mppi}, and  it has been used for safe learning and motion planning applicable to a broad class of nonlinear systems  \cite{gahlawat2020l1gp,lakshmanan2020safe,gahlawat2021contraction-l1-gp}.

To put things into perspective, this paper is focused on applying the \lonew adaptive control architecture to robustify an RL policy. 
In terms of technical details, this paper considers more general scenarios, e.g., unmatched disturbances and unknown input gain, which were not considered in \cite{lakshmanan2020safe,gahlawat2021contraction-l1-gp}. 

\noindent \textbf{Learning to adapt}: Meta-RL has recently been proposed to achieve fast adaptation of a pre-trained policy in the presence of dynamic variations
\cite{finn2017MAML-meta,nagabandi2018adapt-metaRL,nagabandi2018deep-continual,saemundsson2018meta-rl-gp,xu2020task}.
Despite impressive performance mainly in terms of fast adaptation demonstrated by these methods, the intermediate policies learned during the adaptation phase will most likely still fail. This is because a certain amount of information-rich data needs to be collected in order to {\it learn} a good model and/or policy. On the other hand, rooted in  the theory of adaptive control and disturbance estimation,  \cite{ioannou2012robust,naira2010l1book,chen2015dobc}, our proposed method can {\it quickly  estimate} the discrepancy between a nominal model 
and the actual dynamics, and {\it actively compensate} for it in a timely manner. We envision that our proposed method can be combined with these methods to achieve robust and fast adaptation.

\subsection{Statement of contributions}
For controlling systems with continuous state and action spaces, we propose an {\it add-on} approach to robustifying an RL policy, which can be trained in standard ways without consideration of a broad class of potential dynamic variations. The essence of the proposed approach lies in augmenting it with  an \lonew adaptive control (\loneAC) scheme \cite{naira2010l1book} that quickly estimates and compensates for the uncertainties so that {\it the dynamics of the system in  the perturbed environment are close to that in the nominal environment}, in which the RL policy is trained and thus expected to function well. 
The idea  is illustrated in  Fig.~\ref{fig:adaptive-RL}. 

Different from most of existing robust RL methods using domain randomization or robust/adversarial training  \cite{tobin2017domainrand,peng2018sim2real,loquercio2019deep-drone-RL,pinto2017robust-adversarial,abdullah2019wasserstein}, the proposed approach can be used to robustify an RL policy, which is trained  either in a simulator or in the real world, using both model-free and model-based methods, without consideration of a broad class of uncertainties in the training. We empirically validate the approach on both numerical examples and real hardware.

\section{Problem Setting}

We assume that we have access to the system dynamics in the {\it nominal} environment, either simulated or in the real world, and they are described by a nonlinear control-affine model:
\begin{equation}\label{eq:dynamics-nom}
   \dot x(t) = f(x(t)) + g(x(t))u(t) \triangleq F_\textup{nom}(x(t),u(t)),
\end{equation}
where  $x(t)\in \mcx\subset {\mathbb{R}^n}$ and $u(t)\in \mcU\subset{\mathbb{R}^m}$ are the state and input vectors, respectively,  $\mcX$ and $\mcU$ are compact sets, \black{$f:{\mathbb{R}^{n}} \rightarrow {\mathbb{R}^{n}} $ and $g: {\mathbb{R}^{m}} \rightarrow {\mathbb{R}^{n\times m}}$}  are known and locally Lipschitz-continuous functions. Moreover, $g(x)$ has full column rank for any $x\in\mcX$. 
\begin{remark}
Control-affine models are commonly used for control design and can represent a broad class of mechanical and robotic systems. In addition, a control non-affine model can be converted into a control-affine model by introducing extra state variables (see e.g., \cite{takano2020robustcbf}). Therefore, the control-affine assumption is not very restrictive.
\end{remark}
The nominal model \eqref{eq:dynamics-nom} can be  from physics-based modeling, data-driven modeling or a combination of both. Methods exist for maintaining the control affine structure in data-driven modeling (see e.g.,   \cite{khojasteh2020probabilistic}). 

\begin{assumption}\label{assump:nominal-policy}
We have access to a {\it nominal} control policy, $\pi_{o}(x)$, which is trained using the {\it nominal} dynamics \eqref{eq:dynamics-nom} and thus functions well under such dynamics. Moreover, $\pi_0(x)$ is Lipschitz continuous in $\mcX$ with a Lipschitz constant $l_\pi$. 
\end{assumption}

The policy $\pi_{o}(x)$ can be trained either in a simulator or in the real world in the standard (i.e., non-robust) way, using either model-based and model-free methods. The Lipschitz continuity assumption is needed to derive an error bound for estimating the disturbances in \cref{sec:sub-analysis-l1-augmentation}. 
The nominal policy $\pi_0$ could fail in the perturbed environment due to the dynamic variations. We, therefore, propose a method to improve the robustness of this nominal policy in the presence of such dynamic variations, by leveraging \loneAC~\cite{naira2010l1book}. To achieve this, we further assume that the dynamics of the agent in the {\it perturbed} environment can be represented by 
\begin{equation}\label{eq:dynamics_perturb}
  \dot x = f(x) + g(x)\Lambda u + d(t,x),  
\end{equation}
where $\Lambda$ is an unknown input gain matrix, which satisfies \cref{assump:unknown-input-gain}, $d(t,x)$ is an unknown function that can capture parameter perturbations, unmodeled dynamics and external disturbances. \black{It is obvious that the perturbed dynamics \eqref{eq:dynamics_perturb} can be equivalently written as 
\begin{equation}\label{eq:dynamics-perturb-wrt-nom}
    \dot x= F_\textup{nom}(x,u)+\sigma(t,x,u),
    \vspace{-2mm}
\end{equation}}
where 
\begin{equation}\label{eq:sigma-txu-defn}
    \sigma(t,x,u) \trieq g(x)(\Lambda-I)u(t)+d(t,x).
\end{equation}

\begin{remark}
Uncertain input gain is very common in real-world systems. For instance,  actuator failures,  and variations in mass or inertia for force- or torque-controlled robotic systems,  normally induce such input gain uncertainty. For a single-input system, $\Lambda =0.6$ indicates a 40\% loss of the control effectiveness. Our representation of such uncertainty in \cref{eq:dynamics_perturb} is broad enough to capture a large class of scenarios, while still allowing for effective compensation of such input gain uncertainty using \loneAC~(detailed in Section~\ref{sec:adaptive-rl}). 
\end{remark}
To provide a rigorous treatment, we make the following assumptions on the perturbed dynamics \cref{eq:dynamics_perturb}. 
\begin{assumption}\label{assump:unknown-input-gain} 
The matrix $\Lambda$ in  \cref{eq:dynamics_perturb} is an unknown strictly row-diagonally dominant matrix with $\textup{sgn}(\Lambda_{ii})$ known. Furthermore, there exists a compact convex set $\mathbb{\Lambda}$ such that  $\Lambda\in \mathbb{\Lambda}$. 
\end{assumption}
\begin{remark}
The first statement in \cref{assump:unknown-input-gain} indicates that $\Lambda$ is always non-singular with known sign for the diagonal elements, and is often needed in applying adaptive control methods to mitigate the effect of uncertain input gain (see \cite[Sections 6 and 7]{ioannou2012robust}). Without loss of generality, we further assume that $\mathbb{\Lambda}$ in \cref{assump:unknown-input-gain} contains the $m$ by $m$ identity matrix, $I$.
\end{remark}

\begin{assumption}\label{assump:lipschitz-bound-fg}
There exist positive constants $l_d,~l_d^\prime,$ $b_d,$ $l_{f},$ and $l_{g}$  such that for any $x,y \in \mcX$ and $t,\tau\geq 0$, the following inequalities hold:
\begin{align} 
\left\| {d(t,x) - d(\tau ,y)} \right\| &\le {l_d}\left\| {x - y} \right\| + {l_d^\prime}\abs{t-\tau}, \label{eq:d-lipschitz-cond}\\
\left\| {d(t,0)} \right\| & \le {b_d},  \label{eq:d-x0-bound}\\
\left\| {f(x) - f(y)}\right\| & \le l_{f}\left\| {x - y}\right\|, \label{eq:f-lipschitz-cond}
\\
\left\| {g(x) - g(y)}\right\| & \le l_{g}\left\| {x - y}\right\|. \label{eq:g-lipschitz-cond}
\end{align}
\end{assumption}
\begin{remark}
This assumption essentially indicates that the rate of variation of $d(t,x)$ with respect to both $t$ and $x$, and of $f(x)$ and $g(x)$ with respect to $x$,  in $\mcX$, are bounded. It is needed for deriving the  theoretical error bounds (in \cref{lemma:estiamte-error-bound}) for estimating the lumped disturbance, $\sigma(t,x,u)$. 
\end{remark}
The problem we are tackling can be stated as follows. 
\textbf{Problem Statement}: 
Given an RL policy $\pi_{o}(x)$ well trained in a nominal environment with the nominal dynamics \cref{eq:dynamics-nom}, assuming the dynamics in the perturbed environment are represented by \cref{eq:dynamics_perturb} satisfying \cref{assump:unknown-input-gain,assump:lipschitz-bound-fg}, provide a solution to improve the robustness of the policy $\pi_{o}(x)$ in the perturbed environment.

\section{ \lonew Adaptive Augmentation for RL Policy Robustification}\label{sec:adaptive-rl}
\subsection{Overview of the proposed approach}
The idea of our proposed approach is depicted in Fig.~\ref{fig:adaptive-RL}. With our approach, the {training} phase is standard: the nominal policy can be trained using almost any RL methods (both model-free and model-based) in a nominal environment. After getting a nominal policy that functions well in the nominal environment, 
for {\it policy  execution}, an \lonew controller is designed to augment and work together with the nominal policy. The \lonew controller uses the dynamics of the nominal environment \eqref{eq:dynamics-nom} as an internal nominal model, estimates the discrepancy between the nominal model and the actual dynamics and compensates for this discrepancy so that {\it the actual dynamics with the \lonew controller} (illustrated by the shaded area of Fig.~\ref{fig:adaptive-RL}) {\it are close to the nominal dynamics}. Since the RL policy is well trained using the nominal dynamics, it is expected to function well in the presence of the dynamic variations {\it and} the \lonew augmentation. 

\subsection{RL training for the nominal policy}
As mentioned before, the policy can be trained in the standard way, using almost any RL method including both model-free and model-based one. The only requirement is that one has access to the nominal dynamics of the training environment in the form of \eqref{eq:dynamics-nom}. 


As an illustration of the idea, for the experiments in Section~\ref{sec:experiments},  we choose PILCO \cite{deisenroth2011pilco}, a model-based policy search method using Gaussian processes, 
soft actor-critic \cite{haarnoja2018soft-sac}, a state-of-the-art model-free deep RL method, and
a trajectory optimization method based on differential dynamic programming (DDP) \cite{tassa2012synthesis}
to obtain the nominal policy. 


\subsection{\lonew adaptive augmentation for policy robustification}\label{sec:sub-l1-augmentation}
In this section, we explain how an \loneAC~scheme can be designed to augment and robustify a nominal RL policy. An \lonew controller mainly consists of three components: a state predictor, an adaptive law,  and a low-pass filtered control law. The state predictor is used to predict the system's state evolution, and the prediction error is subsequently used in the adaptive law to update the disturbance estimates. The control law  aims to compensate for the estimated disturbance.
For the perturbed system \eqref{eq:dynamics_perturb} with the nominal dynamics \eqref{eq:dynamics-nom}, the {\bf state predictor} is given by
\begin{equation}\label{eq:state-predictor}
    \begin{aligned}
        \dot{\hat{x}}(t)=F_{\textup{nom}}(x,u)+\hsigma(t) -a \tilde{x}(t), 
    \end{aligned}
\end{equation}
where $\tilx (t)\triangleq \hatx(t) - x(t)$ is the prediction error,  $a$ is a positive constant, $\hsigma(t)$ is the estimation of the lumped disturbance, $\sigma(t,x,u)$, at time $t$. 
Following the piecewise-constant (PWC) {\bf adaptive law} (which connects with the CPU sampling time) \cite[Section~3.3]{naira2010l1book}, the disturbance estimates are updated as
\begin{equation}\label{eq:adaptive_law}
\begin{split}
  \hsigma(t)
    &=  \hsigma(iT)
    , \quad t\in [iT, (i+1)T),\\
\hsigma(iT) &= - \frac{a}{{{e}^{aT}}-1}\tilde{x}(iT),
\end{split}
\end{equation}
for $i=0,1,\cdots$, where $T$ is the estimation sampling time. With $\hsigma(t)$, we further compute 
\begin{equation}
    \begin{bmatrix}
   \hsigma_m(t) \\
   \hsigma_{um}(t) 
\end{bmatrix} = \left[g(x)\   g^{\perp}(x)\right]^{-1}\hsigma(t), 
\end{equation}
where $\hsigma_m(t)$ and $\hsigma_{um}(t)$ are the {\it matched} and {\it unmatched} disturbance estimates, respectively,  $g^\perp (x)\in\mbR^{n-m}$ satisfies $g(x)^\top g^\perp (x) = 0$, and $\textup{rank}\left(\left[g(x)\   g^{\perp}(x)\right]\right) = n$ for any $x\in\mcX$. 
\black{From \eqref{eq:dynamics-perturb-wrt-nom} and \eqref{eq:state-predictor}, we see that the total or lumped disturbance  $\sigma(t,x,u)$, is estimated by  
$\hat\sigma(t) \trieq g(x)\hsigma_m(t)+g^\perp(x){\hsigma_um}(t)$}. 
The {\bf control law} is given by 
\begin{equation}\label{eq:adaptive-control-law}
    \begin{split}
        u(t) &= u_\textup{RL}(t) + u_{\mathcal{L}_1}(t), \\
        u_{\mathcal{L}_1}(s) &=  -C(s)\mathfrak{L}[\hsigma_m(t)],
    \end{split}
\end{equation}
where $u_\textup{RL}(t) = \pi_0(x(t))$ is the control command from the nominal RL policy, $u_{\mathcal{L}_1}(s)$ is the Laplace transform of the \lonew control command $u_{\mathcal{L}_1}(t)$, $\mathfrak{L}[\cdot]$ denotes the Laplace transform, and $C(s)\trieq K(sI+K)^{-1}$ is an $m$ by $m$ transfer matrix consisting of low-pass filters with $K\in \mbR^{m\times m}$. 
\begin{table*}[]
\centering
\caption{\black{Comparison with existing approaches to improving the robustness of RL policies}}\label{table:comparison-w-existing-methods}
\color{black}
\begin{tabular}{|c|c|ccc|}
\hline
\multirow{2}{*}{}                & \multirow{2}{*}{\makecell{\bf Robust/Adversarial \\ {\bf Training} \cite{tobin2017domainrand,peng2018sim2real,loquercio2019deep-drone-RL,pinto2017robust-adversarial,abdullah2019wasserstein}}} & \multicolumn{3}{c|}{\bf Post-Training Augmentation}                                                                                                                                                        \\ \cline{3-5} 
  &                                      & \multicolumn{1}{c|}{{ \bf MRAC} \cite{guha2020mrac-rl}}   & \multicolumn{1}{c|}{{\bf DOB} \cite{kim2019rl-dob}}                                               & {{\bf$\bm{\mathcal L_1}$AC}~(ours)}                                                                                           \\ \hline\hline
Complexity of training           & High                                         & \multicolumn{3}{c|}{Low}                                                                   \\ \hline
Training environment & Simulated & \multicolumn{3}{c|}{Simulated \& Real-world}    \\ \hline
Restrictions on structure of dynamics    & {Low}  & \multicolumn{3}{c|}{High (control-affine \& continuous)}  \\ \hline
Control inputs &  {Multiple} & \multicolumn{1}{c|}{Single} & \multicolumn{1}{c|}{Single} & {Multiple}\\ \hline
Restrictions on uncertainties    & {Low}                                             &  \multicolumn{1}{c|}{\begin{tabular}[c]{@{}c@{}}High (matched \\ parametric  uncertainties)\end{tabular}} & \multicolumn{1}{c|}{\begin{tabular}[c]{@{}c@{}}High (matched \\ disturbances)\end{tabular}} & \begin{tabular}[c]{@{}c@{}}Medium (matched \\ uncertainties and disturbances)\end{tabular} \\ \hline
Control policy after training  &  Fixed   & \multicolumn{3}{c|}{Adapted Online}                                                                                       \\ \hline
Validation &  {Sims \& Experiments} &  \multicolumn{1}{c|}{Sims}  &  \multicolumn{1}{c|}{Sims}  &  {Sims \& Experiments} \\\hline
\end{tabular}
\vspace{-2mm}
\end{table*}

\black{
\begin{remark}
As it can be seen from \cref{eq:adaptive-control-law,eq:adaptive_law,eq:state-predictor}, in an \loneAC~scheme with a PWC adaptive law \cite[section~3.3]{naira2010l1book}, all the dynamic uncertainties (such as parametric uncertainties, unmodeled dynamics and external disturbances) are {\it lumped} together and estimated as a total disturbance. This is different from most adaptive control schemes \cite{ioannou2012robust}, which rely on a parameterization of the uncertainty to design adaptive laws for updating parameter estimates and usually consider only stationary uncertainties that do not directly depend on  time.  
\end{remark}}
Details on deriving the estimation and control laws can be found in \cite{zhao2020aR-cbf,zhao2021RALPV-tac}. 
\black{The working principle of the \lonew controller can be summarized as follows: the state predictor \eqref{eq:state-predictor} and the adaptive law \cref{eq:adaptive_law} can accurately estimate the lumped disturbances, $\hsigma_m(t)$ and $\hsigma_{um}(t)$. In fact, under certain conditions, a bound on the estimation error, $\hsigma(t)-d(t,x)$,  can be derived and is included in 
Appendix A. Additionally, the control law \eqref{eq:adaptive-control-law} mitigates the effect of  disturbances by cancelling those within the bandwidth of the low-pass filter.} Note that unmatched disturbances (also known as mismatched disturbances in the disturbance-observer based control literature \cite{chen2015dobc}) cannot be directly canceled by control signals and are more challenging to deal with. 
\black{\begin{remark}
In designing the \lonew controller consisting of \cref{eq:adaptive-control-law,eq:adaptive_law,eq:state-predictor}, we assume that the states are measured without noise. In practice, as long as the estimation sampling time is not too small and the filter bandwidth is not too large, moderate measurement noise that always exists in real-world systems  usually does not cause big issues, as demonstrated by the hardware experiments in \cref{sec:sub-exp-pendubot-realworld}.
\end{remark}}
\begin{remark}
Variants of the proposed \loneAC~law \cref{eq:state-predictor,eq:adaptive_law,eq:adaptive-control-law} have been used to augment other baseline controllers (e.g., PID, linear quadratic regulator, MPC), as demonstrated in numerous applications and flight tests, \cite{naira2010l1book}. \end{remark}

{\subsection{Analysis of the \lonew adaptive augmentation}\label{sec:sub-analysis-l1-augmentation}
In this section, we provide an analysis of the \lonew  augmentation presented in \cref{sec:sub-l1-augmentation} and explain the cases under which its performance can be limited. The working principle of the \lonew controller can be summarized as follows: the state predictor \eqref{eq:state-predictor} and the adaptive law \cref{eq:adaptive_law} can accurately estimate the lumped disturbances, $\hsigma_m(t)$ and $\hsigma_{um}(t)$, while the control law \eqref{eq:adaptive-control-law} mitigates the effect of matched disturbance, $\hsigma_m(t)$, by cancelling it within the bandwidth of the low-pass filter.
\subsubsection{Estimation error bound} Next, we will show that under certain assumptions, an error bound in estimating the lumped disturbance $\sigma(t,x,u)$ (and thus the matched and unmatched components) can be derived. Furthermore, this bound can be arbitrarily reduced by decreasing the estimation sample time $T$, which indicates that the estimation after one estimation sampling interval can be arbitrarily accurate. 
\begin{lemma}\label{lemma:d-bound}
Given the perturbed dynamics \eqref{eq:dynamics_perturb} subject to \cref{assump:unknown-input-gain,assump:lipschitz-bound-fg},  if $x(t)\in \mcX$ and $u(t)\in \mcU$ for any $t$ in $[0,\tau]$, we have that 
\begin{equation}\label{eq:d-xdot-bound}
\begin{gathered}
        \norm{d(t,x)}\leq \theta, \;  \norm{g(x)(\Lambda-I) u} \leq \rho,\   \\ \norm{\sigma(t,x,u)}\leq \rho +\theta,\; \norm{\dot{x}(t)}\leq \phi, 
\end{gathered}
\end{equation}
for any $t$ in $[0,\tau]$, where
\begin{align}
    \theta &\trieq l_d \max_{x\in \mcX} \norm{x} + b_{d}, \label{eq:theta-defn}\\
    \rho & \trieq \max_{\Lambda\in \mathbb{\Lambda}}\norm{\Lambda-I}\max_{x\in \mcX}\norm{g(x)}\max_{u\in \mcU}\norm{u}, \\
    \phi &\trieq  \max_{x\in \mcX}\norm{f(x)}+\max_{x\in \mcX}\norm{g(x)}\max_{u\in \mcU}\norm{u} +\theta +\rho. \label{eq:phi-defn}
\end{align}
\end{lemma}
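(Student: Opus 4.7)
The plan is to obtain each of the four bounds in \eqref{eq:d-xdot-bound} by direct application of the triangle inequality together with \cref{assump:unknown-input-gain,assump:lipschitz-bound-fg} and the compactness of $\mcX,\mcU,\mathbb{\Lambda}$. Since none of the required ingredients involves the dynamics of \eqref{eq:dynamics_perturb} itself (beyond the form of $\sigma$ and of $\dot x$), the lemma is essentially a collection of pointwise estimates, so there is no genuine obstacle, only bookkeeping. All four bounds hold for every $t\in[0,\tau]$ because the hypothesis $x(t)\in\mcX$, $u(t)\in\mcU$ is assumed to hold on that interval.

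First I would derive the bound on $d(t,x)$. Applying \eqref{eq:d-lipschitz-cond} with $y=0$ and $\tau=t$ gives $\norm{d(t,x)-d(t,0)}\le l_d\norm{x}$, and then combining with \eqref{eq:d-x0-bound} via the triangle inequality yields $\norm{d(t,x)}\le l_d\norm{x}+b_d\le l_d\max_{x\in\mcX}\norm{x}+b_d=\theta$, recovering the definition \eqref{eq:theta-defn}. Next, using submultiplicativity of the induced norm, $\norm{g(x)(\Lambda-I)u}\le \norm{g(x)}\,\norm{\Lambda-I}\,\norm{u}$, and maximizing each factor over $\mcX$, $\mathbb{\Lambda}$, and $\mcU$ respectively (all compact, so the maxima are attained) gives $\norm{g(x)(\Lambda-I)u}\le \rho$.

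For the third bound, I would substitute the definition \eqref{eq:sigma-txu-defn} of $\sigma$ and apply the triangle inequality together with the two bounds just established:
\begin{equation*}
\norm{\sigma(t,x,u)}\le \norm{g(x)(\Lambda-I)u}+\norm{d(t,x)}\le \rho+\theta.
\end{equation*}
Finally, to bound $\dot x$ I would start from \eqref{eq:dynamics_perturb} rewritten as $\dot x=f(x)+g(x)u+g(x)(\Lambda-I)u+d(t,x)$, so that by the triangle inequality
\begin{equation*}
\norm{\dot x}\le \norm{f(x)}+\norm{g(x)}\,\norm{u}+\norm{g(x)(\Lambda-I)u}+\norm{d(t,x)}.
\end{equation*}
Maximizing the first two terms over $\mcX$ and $\mcU$, and substituting the bounds $\rho$ and $\theta$ for the remaining two terms, produces exactly $\phi$ as defined in \eqref{eq:phi-defn}.

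The proof is therefore entirely elementary; the only point that merits a brief justification is that each maximum over $\mcX$, $\mcU$, or $\mathbb{\Lambda}$ is finite, which follows from the compactness of these sets together with continuity of $f$, $g$, and $\Lambda\mapsto\norm{\Lambda-I}$. The Lipschitz continuity of $f$ and $g$ from \cref{assump:lipschitz-bound-fg} is in fact not needed for this lemma itself, but will be used later in the derivation of the estimation error bound referenced in the paragraph preceding the statement.
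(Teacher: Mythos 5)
Your proposal is correct and follows essentially the same route as the paper's proof: the same triangle-inequality decompositions for $d$, $g(x)(\Lambda-I)u$, $\sigma$, and $\dot x$, with each term maximized over the compact sets. The only additions are harmless clarifying remarks (finiteness of the maxima, and the observation that \eqref{eq:f-lipschitz-cond}--\eqref{eq:g-lipschitz-cond} are not needed for this lemma), which do not change the argument.
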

\begin{proof} See \cref{sec:sub-proof-lemma-d-bound}.
\end{proof}
Let us define: 
\begin{align}
      \hspace{-3mm}     l_u^\prime  \!\trieq & l_\pi \phi + \norm{K} \nonumber\\
    & \cdot \! (\max_{u\in \mcU}\norm{u}+\sqrt{n}{e^{-aT}}(\theta+\rho)\max_{x\in\mcX}\norm{g^{+}(x)}), \label{eq:l_u-defn}\\
     \hspace{-3mm}   \eta_{1}  \!\trieq & l_d^\prime+l_d\phi, \label{eq:eta1-defn}\\
  \hspace{-3mm}   \eta_{2}  \!\trieq & (l_{g}\phi \max_{u\in \mcU}\norm{u(t)} \!+\! l_{u} ^\prime\max_{x\in \mcX}\norm{g(x)})\max_{\Lambda\in \mathbb{\Lambda}}\norm{\Lambda\!-\!I}, \label{eq:eta2-defn}\\
     \hspace{-3mm}   \gamma(T)  \!\trieq& 2\sqrt{n}(\eta_{1}+\eta_{2}) T +\sqrt{n} (1- e^{-aT})(\theta+\rho), \label{eq:gammaTs-defn}
\end{align}
where $g^+(x)$ is the pseudoinverse of $g(x)$, and $\theta$ and $\phi$ are defined in \eqref{eq:theta-defn} and \eqref{eq:phi-defn}, respectively. We next establish the estimation error bounds associated with the estimation scheme in \eqref{eq:state-predictor} and \eqref{eq:adaptive_law}.
\begin{lemma}\label{lemma:estiamte-error-bound}
Given the perturbed dynamics \eqref{eq:dynamics_perturb} subject to \cref{assump:unknown-input-gain,assump:lipschitz-bound-fg}, and the estimation law in \eqref{eq:state-predictor} and \eqref{eq:adaptive_law}, if $x(t)\in \mcX$ and $u(t)\in \mcU$ for any $t$ in $[0,\tau]$ with $\tau>T$, the estimation error can be bounded as 
\begin{align}\label{eq:estimation_error_bound}
     &\norm{\sigma(t,x,u)-\hat{\sigma}(t)} 
     \leq
     \left\{
    \begin{array}{ll}
    \theta+\rho,\quad &\forall~ 0\leq t<T, \\
    \gamma(T), \quad &\forall~ T\leq t< \tau,
    \end{array}
    \right.
\end{align}
with $\sigma(t,x,u)$ defined in \cref{eq:sigma-txu-defn}. Moreover, 
$\lim_{T\rightarrow 0} \! \gamma(T) \!=\! 0.$
\end{lemma}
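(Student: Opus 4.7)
My strategy will be to (i) derive a closed-form expression for the update of $\hsigma(iT)$ from \eqref{eq:state-predictor}--\eqref{eq:adaptive_law}, (ii) decompose the estimation error on $[iT,(i+1)T)$ into a ``time-variation of $\sigma$'' contribution plus a ``transient residual'' contribution, and (iii) bound each piece using \cref{assump:lipschitz-bound-fg} and \cref{lemma:d-bound}. First I would subtract the state predictor \eqref{eq:state-predictor} from the perturbed dynamics in the form \eqref{eq:dynamics-perturb-wrt-nom} to obtain the prediction-error ODE
\begin{equation*}
\dot{\tilx}(t)=-a\,\tilx(t)+\hsigma(t)-\sigma(t,x,u).
\end{equation*}
On the initial interval $[0,T)$, the adaptive law \eqref{eq:adaptive_law} together with the standard initialization $\hatx(0)=x(0)$ gives $\hsigma(t)\equiv 0$, so the bound $\norm{\sigma(t,x,u)-\hsigma(t)}\leq \theta+\rho$ is immediate from \cref{lemma:d-bound}.

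For a generic interval $[iT,(i+1)T)$ with $i\geq 1$, I would integrate the linear ODE above with $\hsigma$ piecewise constant to obtain
\begin{equation*}
\tilx((i+1)T)=e^{-aT}\tilx(iT)-\int_{iT}^{(i+1)T}e^{-a((i+1)T-s)}\bigl(\sigma(s,x,u)-\hsigma(iT)\bigr)\,ds,
\end{equation*}
then substitute this into the update rule of \eqref{eq:adaptive_law}. A short algebraic check shows that if $\sigma$ were constant $\equiv\bar\sigma$ on the interval, the substitution collapses to $\hsigma((i+1)T)=e^{-aT}\bar\sigma$, leaving an estimation error of exactly $(1-e^{-aT})\bar\sigma$ on the subsequent interval. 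For the general case I would add and subtract this idealized constant-$\sigma$ update to split the error into (i) a contribution from the time variation of $\sigma$ across a window of length at most $2T$ and (ii) a transient residual bounded in norm by $(1-e^{-aT})(\theta+\rho)$ via \cref{lemma:d-bound}; the latter matches the second summand in \eqref{eq:gammaTs-defn}.

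The remaining work is to bound the time variation of $\sigma(t,x,u)=g(x)(\Lambda-I)u+d(t,x)$ over a window of length at most $2T$. Using \eqref{eq:d-lipschitz-cond}--\eqref{eq:g-lipschitz-cond} together with $\norm{\dot x}\leq \phi$ from \cref{lemma:d-bound}, the $d$-part gives a constant proportional to $\eta_{1}=l_d^\prime+l_d\phi$. For the $g(x)(\Lambda-I)u$-part I need a time-Lipschitz constant $l_u^\prime$ for $u=\uRL+\uLone$: the nominal-policy component contributes $l_\pi\phi$ via \cref{assump:nominal-policy}, and for $\uLone$ I would use the filter ODE $\dot u_{\mathcal L_1}=-K u_{\mathcal L_1}-K\hsigma_m$, bounding $\norm{u_{\mathcal L_1}}\leq \max_{u\in\mcU}\norm{u}$ from the working assumption $u(t)\in\mcU$ and $\norm{\hsigma_m}\leq \max_{x\in\mcX}\norm{g^{+}(x)}\norm{\hsigma}$ with $\norm{\hsigma}\leq \sqrt n\,e^{-aT}(\theta+\rho)$ inherited from the constant-$\sigma$ analysis of the previous paragraph; this reproduces the second summand of $l_u^\prime$ in \eqref{eq:l_u-defn}. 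Expanding the product rule on $g(x(t))u(t)$ then yields the definition of $\eta_{2}$ in \eqref{eq:eta2-defn}, and the factor $\sqrt n$ in $\gamma(T)$ arises from converting componentwise $\ell_\infty$ bounds into an $\ell_2$ bound on $\sigma-\hsigma\in\mathbb R^{n}$. The limit $\gamma(T)\to 0$ as $T\to 0$ follows by inspection, since $(1-e^{-aT})(\theta+\rho)\to 0$ continuously and $2\sqrt n(\eta_1+\eta_2)T\to 0$ linearly.

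The main obstacle will be the self-referential nature of $l_u^\prime$: the filter's rate depends on $\hsigma_m$, whose magnitude depends on the estimator's output, which itself depends on $u$. I would break the circularity by proving, inductively in the sampling index $i$, an a priori bound of the form $\norm{\hsigma(iT)}\leq e^{-aT}(\theta+\rho)+\mathcal{O}(T)$ via the closed-form update derived in the second paragraph. A secondary bookkeeping challenge is carefully aligning intervals so that the variation of $\sigma$ is assessed over a window of length at most $2T$ rather than $T$, reflecting that $\hsigma(iT)$ captures the averaged behavior of $\sigma$ on $[(i-1)T,iT)$ rather than its value at $iT$.
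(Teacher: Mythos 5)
Your proposal is correct and follows essentially the same route as the paper: the same prediction-error ODE, the same exact cancellation of the $e^{-aT}\tilde{x}(iT)$ and $\hat\sigma(iT)$ terms showing $\hat\sigma$ equals $e^{-aT}$ times an averaged value of $\sigma$ over the previous sampling interval (the paper formalizes your ``add-and-subtract the constant-$\sigma$ update'' step via an element-wise first mean value theorem, which is where the $\sqrt{n}$ enters), the same split into a $2T$-window variation term and a $(1-e^{-aT})(\theta+\rho)$ residual, and the same filter-ODE argument for $l_u'$. The circularity you flag is resolved in the paper simply by ordering: the magnitude bound $\norm{\hat\sigma(t)}\le \sqrt{n}\,e^{-aT}(\theta+\rho)$ follows directly from the exact representation of $\hat\sigma$ and \cref{lemma:d-bound} under the standing assumption $x(t)\in\mcX$, $u(t)\in\mcU$, before $l_u'$ is ever needed, so no induction is required.
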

\begin{proof}
See \cref{sec:sub-proof-lemma-estimate-err-bnd}. 
\end{proof}
}
\begin{remark}
\cref{lemma:estiamte-error-bound} essentially states that under \cref{assump:unknown-input-gain,assump:lipschitz-bound-fg} and the assumed boundedness of $x$ and $u$, the error for estimation of the lumped disturbance is always bounded. Furthermore, the error after one sampling interval can be arbitrarily reduced (by decreasing $T$). In practice, the size of $T$ is limited by the computational hardware and measurement noise. 
\end{remark}

\subsubsection{Limitations of the proposed approach}
As mentioned before, the control law \cref{eq:adaptive-control-law} only tries to cancel the matched disturbance $\hsigma_m(t)$, while ignoring the unmatched disturbance $\hsigma_{um}(t)$. Dealing with unmatched disturbance in the nonlinear setting has been a long-standing challenging problem for adaptive or disturbance observer based control methods, and need other methods, e.g., those based on robust control \cite{zhao2021tube-rccm}. As a result, 
when the unmatched disturbance dominates the total disturbance, the performance of the proposed approach will be limited. This is demonstrated in \cref{sec:experiments}, e.g., in the quadrotor example in the presence of wind disturbances.

\black{
\subsection{Comparison with existing approaches}
The comparison of our proposed approach with existing approaches is summarized in \cref{table:comparison-w-existing-methods}. Our approach falls into the category of post-training augmentation (PTA), which does not require a special training process such as randomizing parameters and adding disturbances, and allows the training to be done in both simulated and real-world environments, as opposed to robust/adversarial training (RAT) methods. 
Additionally, RAT methods aim to find a fixed policy
for all possible realizations of uncertainties, which could be infeasible when the range of uncertainties is large. Compared to existing PTA methods based on MRAC and DOB, our approach is able to deal with a broader class of uncertainties, and is validated on real hardware.}

\black{On the other hand, similar to other PTA methods, our approach needs the dynamics to be continuous and have a control-affine form, and can only effectively compensate for the matched disturbance. Dealing with the unmatched disturbances in the nonlinear setting has been a long-standing challenging problem for adaptive or DOB-based control methods,  other methods, e.g., those based on robust control \cite{zhao2022tube-rccm-ral}, must be considered. As a result, 
when the unmatched disturbance dominates the total disturbance, the performance of the proposed approach will be limited. This is demonstrated in \cref{sec:experiments}, e.g., in the quadrotor example in the presence of wind disturbances.}
\section{Experiments}\label{sec:experiments}

\black{We now apply the proposed approach to three systems, namely a cart-pole, a Pendubot and a 3-D quadrotor. In particular, for the Pendubot, experiments on real hardware are also conducted}. An overview of the systems and test settings is given in \cref{table:test_system}. The dynamic models for these systems are included in \cref{sec:sub-dyn-model-systems-experiments}. 
{\setlength{\tabcolsep}{1pt}
\begin{table}[h]
\vspace{1mm}
\begin{center}
\caption{An overview of testing systems and settings}\label{table:test_system}
\hspace{-2.8mm}
\begin{tabular}{|c|c|c|c|c|c|}
\hline
{\bf System}    & {\bf \makecell{~State/input~ \\
Dimension}} & {\bf \makecell{~Policy Search~ \\ Methods}}                              & {\bf \makecell{Test \\ Environments}} &  {\bf \makecell{W/ Unmatched\\Disturbances~}} \\ \hline \hline
Cart-pole & 4/1                & PILCO                                               &                    Simulation (\ref{sec:sub-exp-cartpole})                    & Yes  \\ \hline
Pendubot  & 4/1                & \makecell{PILCO, SAC \\ \! \&  \!  DR-SAC} & \makecell{Simulation (\ref{sec:sub-exp-pendubot-sim})      \\ \& \!\!\! Hardware (\ref{sec:sub-exp-pendubot-realworld})    }                   & Yes   \\ \hline
Quadrotor & 12/4               & DDP                 & Simulation  (\ref{sec:sub-exp-quadrotor})                       & Yes   \\ \hline
\end{tabular}
\end{center}
\vspace{-3mm}
\end{table}}

\subsection{Cart-pole swing-up and balance in simulations}\label{sec:sub-exp-cartpole}
The system states include cart position $x_c$ and velocity $\dot x_c$, and pole angle $\theta$ and angular velocity $\dot \theta$. The input is the force applied to the cart. 
The nominal value of the key parameters in the dynamics are  $M=0.5\!~\mathrm{kg}$ (cart mass),  $m=0.5\!~\mathrm{kg}$  (pole mass),  $l_\textup{pole}=0.6$ m   (pole length). The pole is roughly hanging straight down ($\theta=0$) with small random perturbations at the beginning. The goal is to search for a policy that can swing up the pole and balance it at the straight up position (corresponding to $x_c=0$ and $\theta = 180^{\circ}$).

We used PILCO \cite{deisenroth2011pilco}~
to search for a policy in the nominal environment defined by the nominal values mentioned above. PILCO  adopts Gaussian processes (GPs) 
to learn the systems dynamics, uses the learned dynamics together with uncertainty propagation (e.g., based on moment matching or linearization) to predict the cost, and then applies gradient descent to search for the optimal policy. PILCO achieved unprecedented records in terms of  data-efficiency in RL.  

We next perturb the environment to test the robustness of the nominal policy with and without \lonew augmentation. For \lonew~augmentation design,  we use the physics-based model with the nominal parameter values as the nominal model, instead of the GP model learned during policy training, for simplicity. Moreover, the parameters in \cref{eq:state-predictor,eq:adaptive_law,eq:adaptive-control-law} were chosen to be $a=10$, $T=0.002$ second, and $K=200$, and {\it fixed} across all the tests. Figure~\ref{fig:change_single_param} shows the results in the presence of perturbations in the cart mass and pole length, while the perturbations in the latter induced unmatched disturbances. One can see that the \lonew augmentation significantly improves the robustness of the PILCO policy. For instance, PILCO plus \lonew augmentation was  able to consistently achieve the goal even when the cart mass was perturbed to $3\!~\mathrm{kg}$ (six times of its nominal value) or when the pole length was reduced to $0.2$ m (one third of its nominal value). 
\begin{figure}[h]
\vspace{-1mm}
\centering
\includegraphics[width=0.9\linewidth]{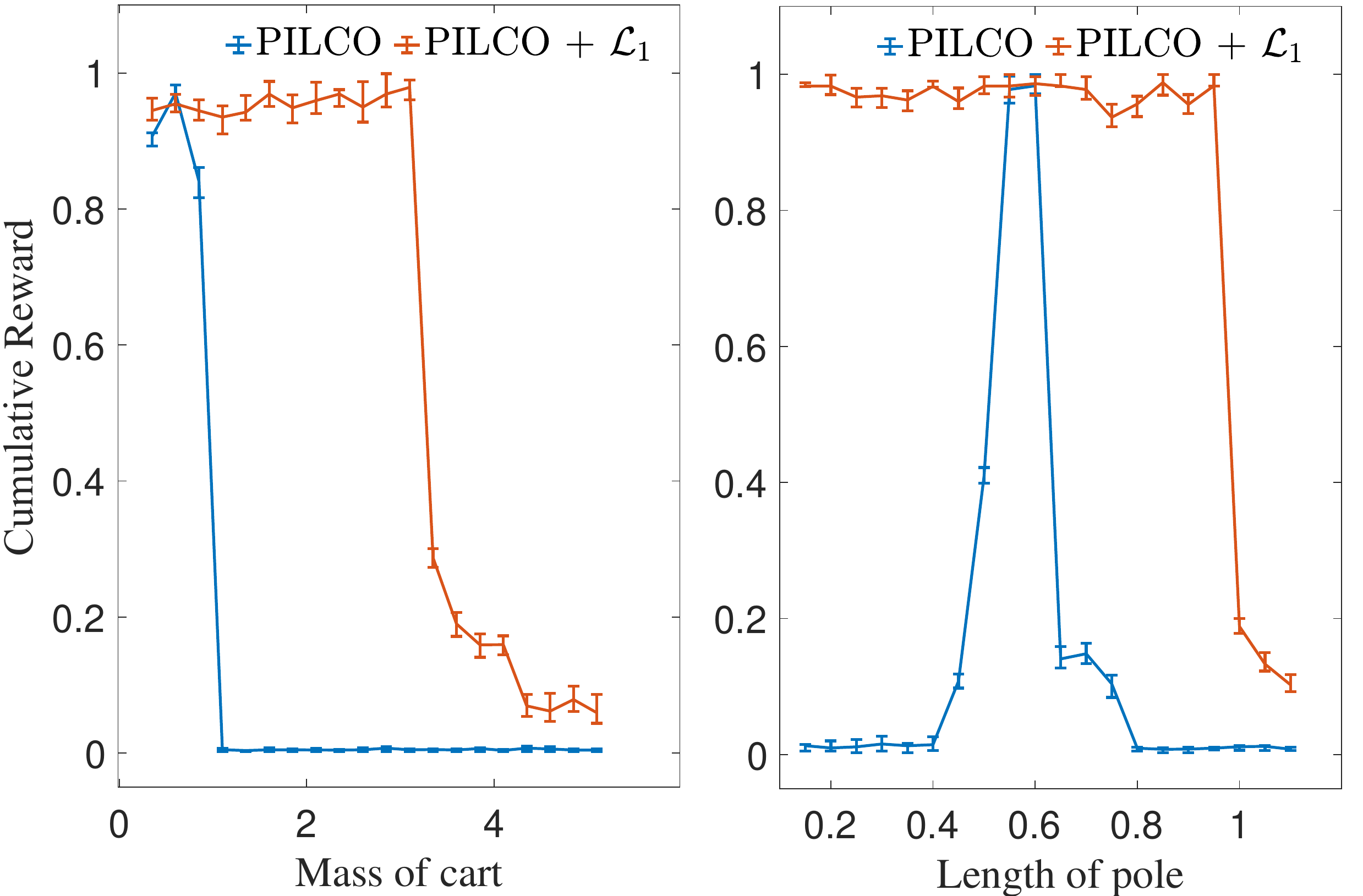}
  \vspace{1mm}
  \caption{Results in the presence of perturbations in cart mass and pole length. Ten trials were performed and average results with variances are shown for each perturbation case. Cumulative reward is normalized.}
  \label{fig:change_single_param}
    \vspace{-1mm}
\end{figure}

We further performed testing under ten scenarios, each of which involves random joint perturbations in the cart mass, pole mass and length parameters, in the range of $M\in[0.1,5]\!~\mathrm{kg},\  m\in[0.1,5]\!~\mathrm{kg},\ l_\textup{pole}\in[0.3,1]$ m. The sampled parameters and the success/failure results for each scenario are shown in Fig.~\ref{fig:10_samples}. Once again, the \lonew augmentation significantly improved the policy robustness, as validated by the much higher success rate. Also, it is not a surprise that PILCO plus \lonew augmentation failed under Scenarios 9 and 10 as these two scenarios involve significant perturbations in pole mass (and additionally in pole length for Scenario 9), which induces unmatched disturbances that could not be compensated for.
\begin{figure}[h]
\vspace{-1mm}
\centering
  \includegraphics[width=0.9\linewidth]{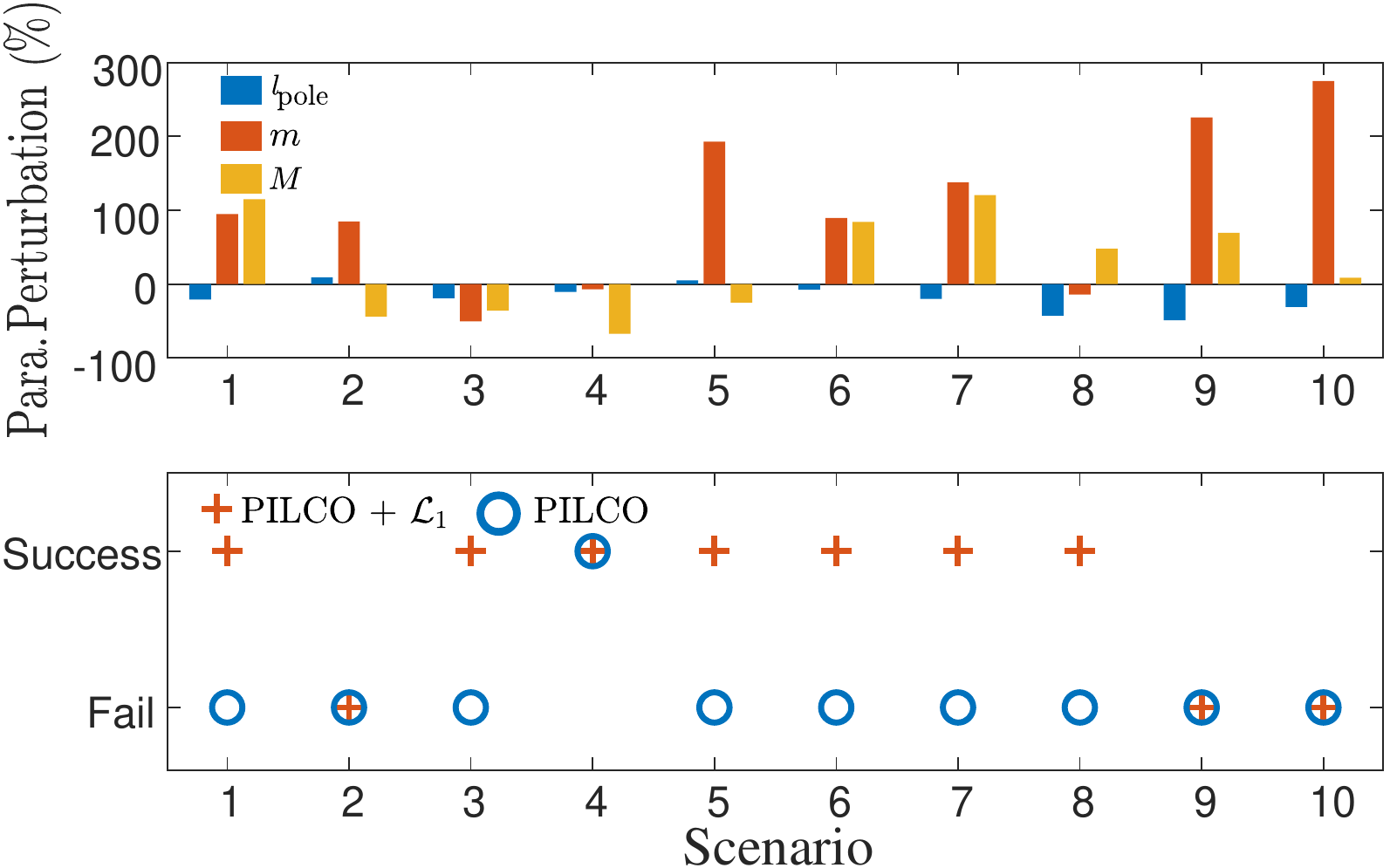}
  \caption{Results (bottom) under ten random perturbations in the cart mass, pole mass and length (percentage perturbation with respect to the nominal value shown at the top)}
  \label{fig:10_samples}
  \vspace{1mm}
\end{figure}




\subsection{Pendubot swing-up and balance in simulations}\label{sec:sub-exp-pendubot-sim}
As depicted in \cref{fig:pend_demo}, the Pendubot is a mechatronic system consisting of two rigid links interconnected by revolute joints with the second joint unactuated. 
The states of the system include the angles and angular rates of the two links, and the control input is the torque applied to Link 1. The task is to swing up the links from initial states $[q_{1}, q_{2}] = [\pi, \pi]$ to the right-up position $[q_{1}, q_{2}] = [0, 0]$ and balance them there, as illustrated in Fig.~\ref{fig:pend_demo}. \black{The same reward function is used for training SAC and DR-SAC policies and defined by 
\begin{equation}\label{pend_reward_function}
    r = \!- 3(\abs{\sin(q_{1})}\!+\! \abs{\cos(q_{1})-1}\!+\! \abs{\sin(q_{2})}\!+\! \abs{\cos(q_{2})\!-\!1}).
\end{equation}}
\setlength{\tabcolsep}{1pt}
\begin{table}[t]
\color{black}
\begin{center}
\caption{\black{Selected training settings for Pendubot}}\label{table:training_policy}
\hspace{-2.8mm}
\resizebox{0.5\textwidth}{!}{%
\begin{tabular}{|c|c|c|c|}
\hline
\bf{Setting} &  \bf{Parameters}  & \bf{\makecell{Input\\Limit}} & \bf{Policy}  \\ \hline\hline
I    & $\Lambda=1.0$, $m_1 = 0.12\!~\mathrm{kg}$, $m_2=0.11\!~\mathrm{kg}$  & $4\!~\mathrm{Nm}$ & SAC \\ \hline
II  & $\Lambda\!\in\! [0.3,1]$, $m_1 \!\in\! 0.12[1, 6]\!~\mathrm{kg}$, $m_2\!\in\!0.11[1,6]\!~\mathrm{kg}$ & $6\!~\mathrm{Nm}$ &DR-SAC1\\ \hline
III  & $\Lambda\!\in\! [0.3,1]$, $m_1 \!\in\! 0.12[1,6]\!~\mathrm{kg}$, $m_2\!\in\!0.11[1,6]\!~\mathrm{kg}$  & $9 \!~\mathrm{Nm}$ & DR-SAC2 \\ \hline
IV & $\Lambda\!\in\! [0.5,1]$, $m_1 \!\in\! 0.12[1,4]\!~\mathrm{kg}$, $m_2 \!\in\!0.11[1,4]\!~\mathrm{kg}$ & $6\!~\mathrm{Nm}$ & DR-SAC3 \\ \hline
\end{tabular}%
}
\vspace{-3mm}
\end{center}
\end{table}
\begin{figure}[h!]
    \centering
    \includegraphics[width=1\linewidth]{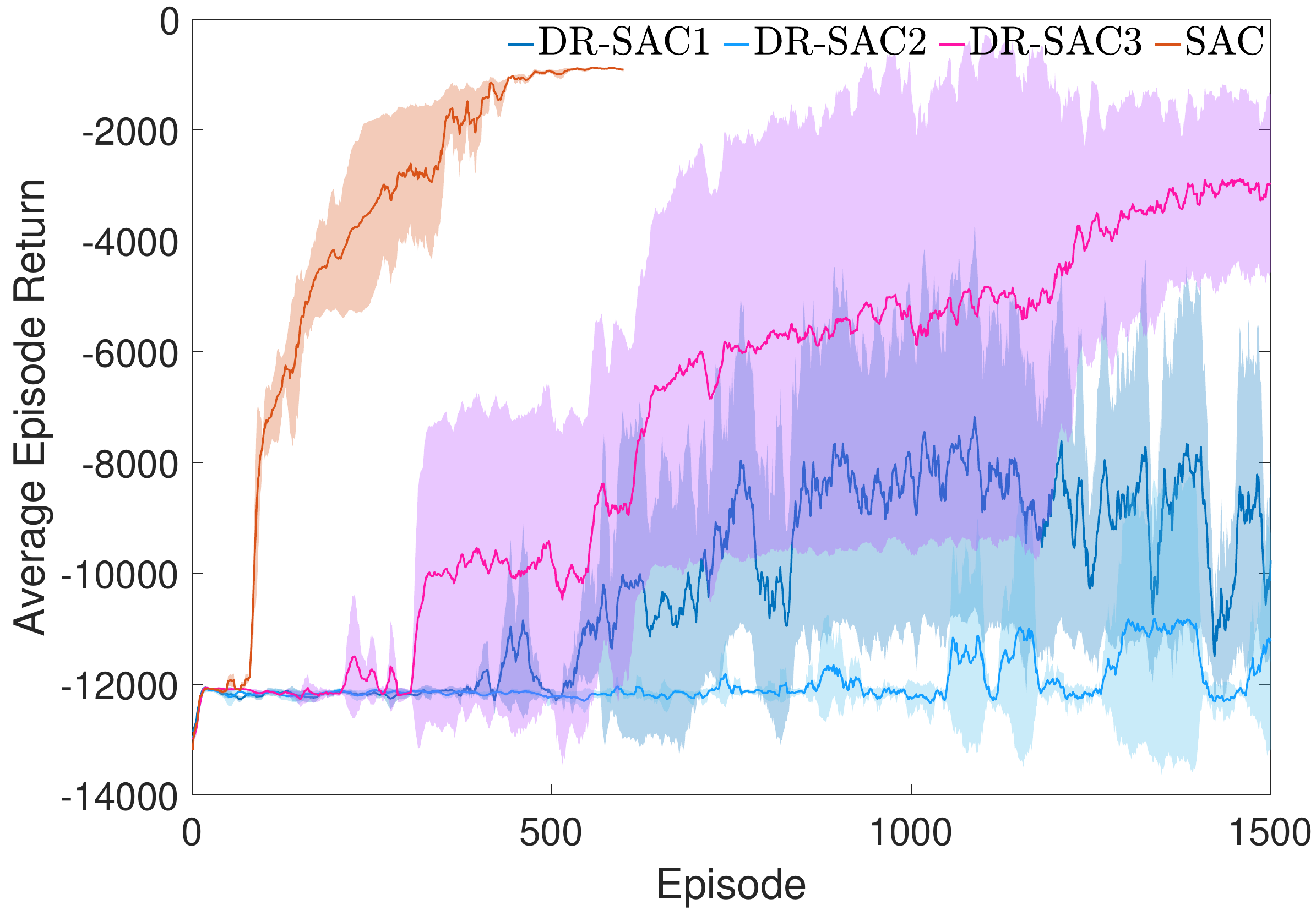}
    \caption{\black{Training curves for Pendubot. Shaded areas denote the variance over five trials.}}
    \label{fig:training-curves}
    \vspace{-2mm}
\end{figure}

\begin{figure*}
\centering
\includegraphics[width=0.8\linewidth]{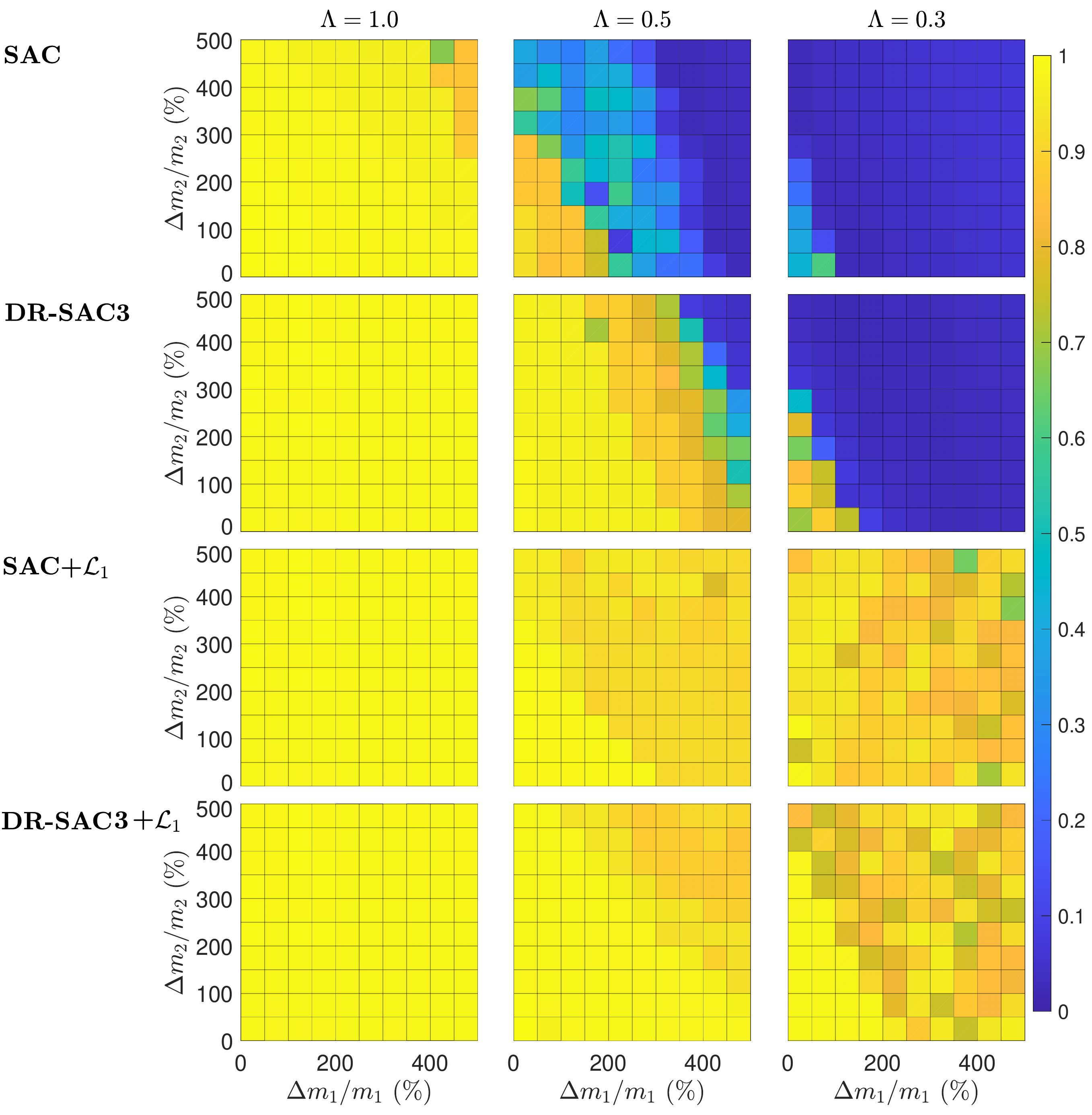}
  \vspace{-2mm}
  \caption{\black{Performance of SAC, DR-SAC3, SAC+\lonew and DR-SAC3+\lonew for Pendubot under perturbations in $m_1$, $m_2$ and $\Lambda$. Percentage change with respect to the nominal value is used to measure the perturbations in $m_1$ and $m_2$.}}
   \label{fig:pend-rewards}
    \vspace{-2mm}
\end{figure*}

The nominal RL policies were trained in simulation using soft actor-critic (SAC) \cite{haarnoja2018soft-sac} implemented in the MATLAB Reinforcement Learning Toolbox.  \black{For comparison, we also trained a few robust policies (termed as DR-SAC) with SAC and domain randomization \cite{peng2018sim2real,loquercio2019deep-drone-RL}, in which three parameters, namely, the input gain ($\Lambda$), the mass of Link 1 ($m_1$), and the mass of Link 2 ($m_2$), were randomly sampled in a variety of 
ranges. Additionally, we tried imposing different control limits (through squashing). When training the SAC and DR-SAC polices, each agent includes an actor and two critics, all three of which share the same neural network structure that has two hidden fully-connected layers with 300 and 400 neurons, respectively. The same hyper-parameters were used for training all the DR-SAC and SAC policies. We did five trials for each setting.
\cref{table:training_policy} lists three of many settings that we tested for training the DR-SAC policies and the setting for training the vanilla SAC policy.
Figure~\ref{fig:training-curves} shows the average episode return (computed using a window of 10 episodes) during training.
The solid curves correspond to the mean and the shaded region to the minimum and maximum average return over the five trials.
As seen in Fig.~\ref{fig:training-curves}, it was much easier and 
took much less episodes to find a good SAC policy, compared to training DR-SAC policies. We were able to find a good DR-SAC policy (i.e., DR-SAC3) under Setting IV, while further increasing the range of parameter perturbations associated with Setting IV led to degraded performance of the resulting DR-SAC policies even with a larger control limit, as illustrated by the training curves for DR-SAC1 and DR-SAC2. For subsequent tests, we chose the best DR-SAC3 from all five trials and compared it with other  control policies. 
}
\begin{figure*}[ht]
   \centering
   \subfloat[]{
   \label{fig:engine_failure}
   \begin{minipage}[c]{0.3\textwidth}
   \centering
   \centerline{\includegraphics[width=0.95\linewidth]{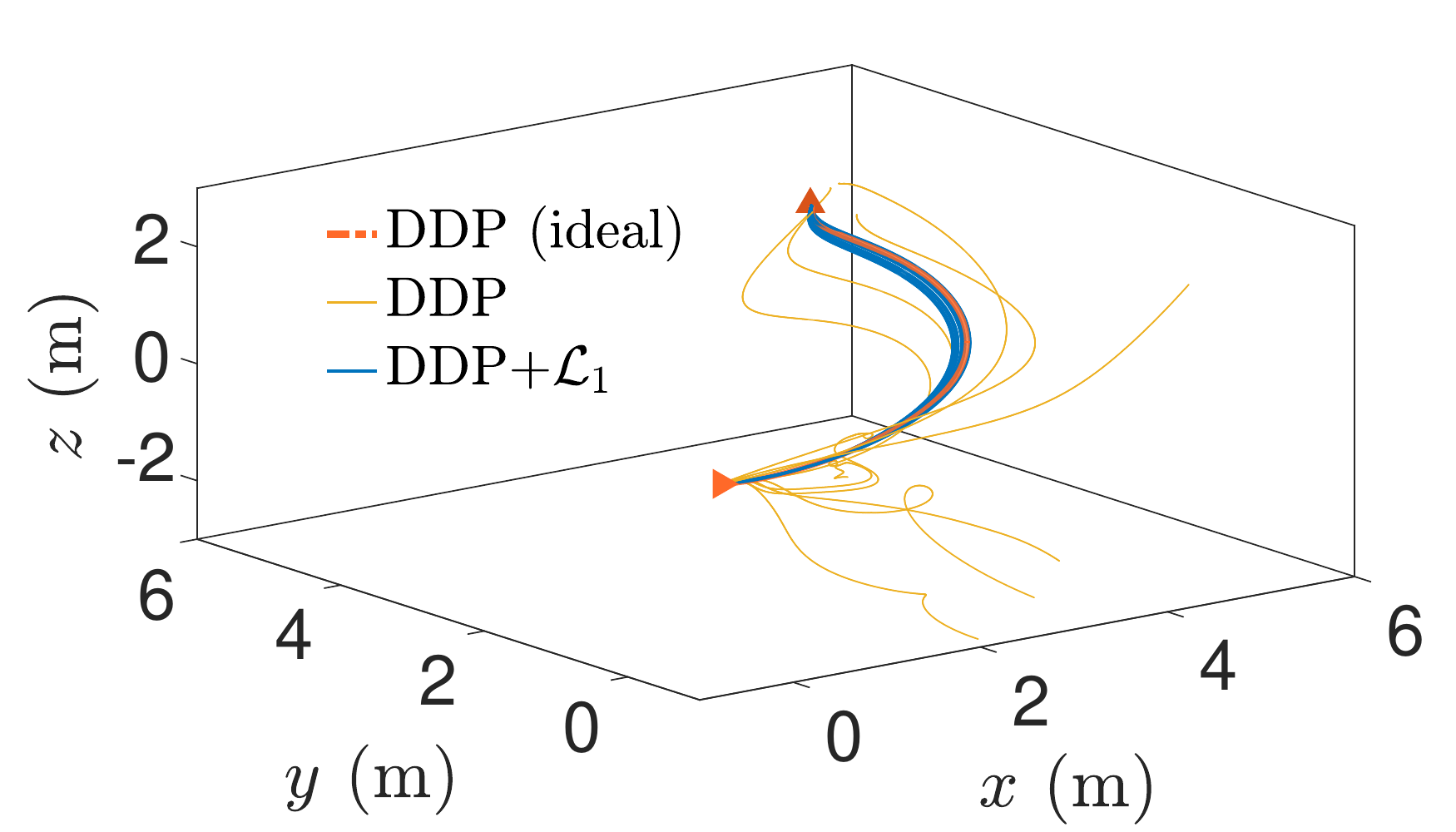}}
   \end{minipage}
   }
   \centering
   \subfloat[]{
   \label{fig:mass_inertia_change}
   \begin{minipage}[c]{0.3\textwidth}
   \centering
   \centerline{\includegraphics[width=0.95\linewidth]{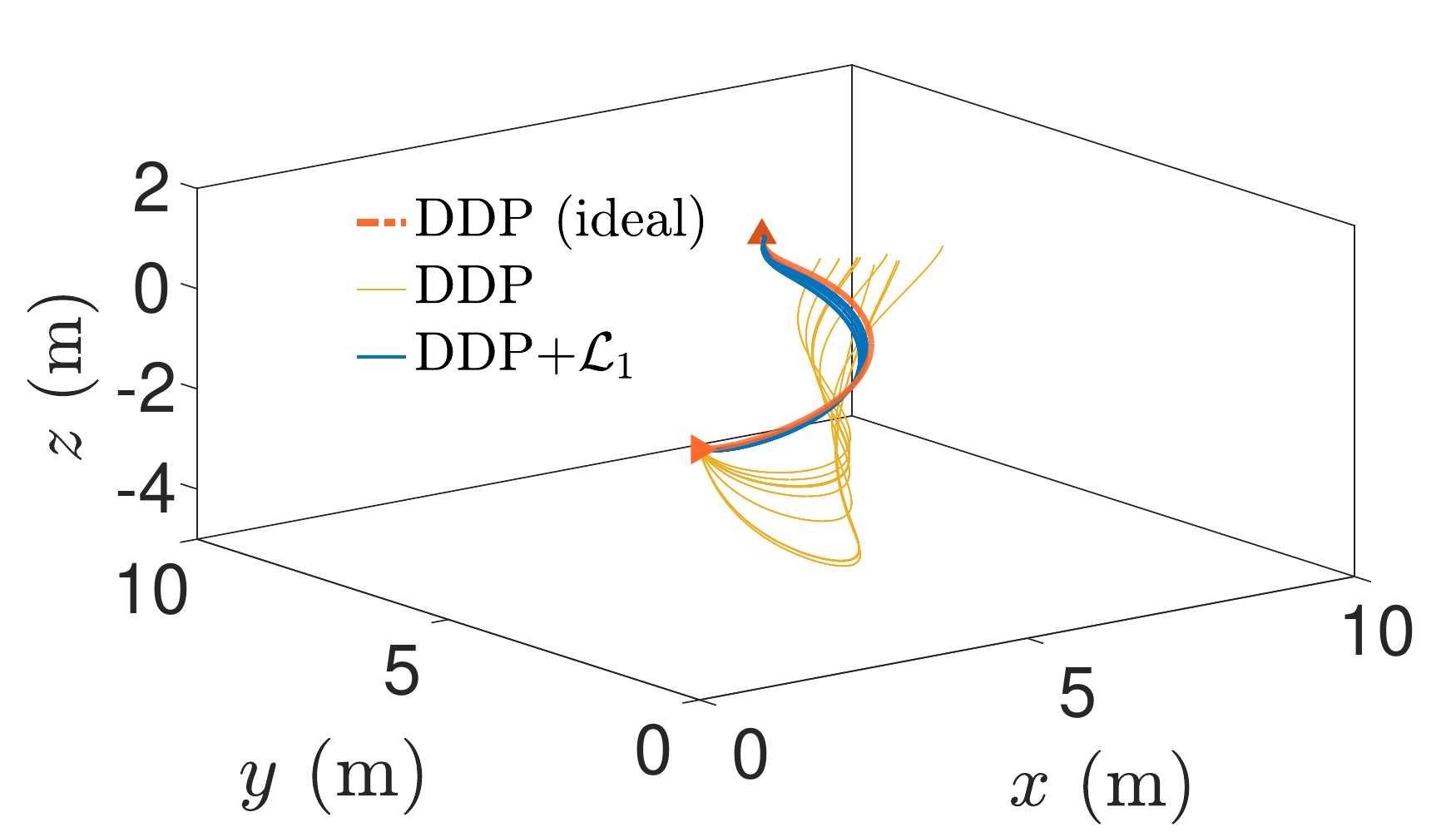}}
   \end{minipage}
   }
   \centering
   \subfloat[]{
   \label{fig:wind_disturbance}
   \begin{minipage}[c]{0.3\textwidth}
   \centering
   \centerline{\includegraphics[width=0.95\linewidth]{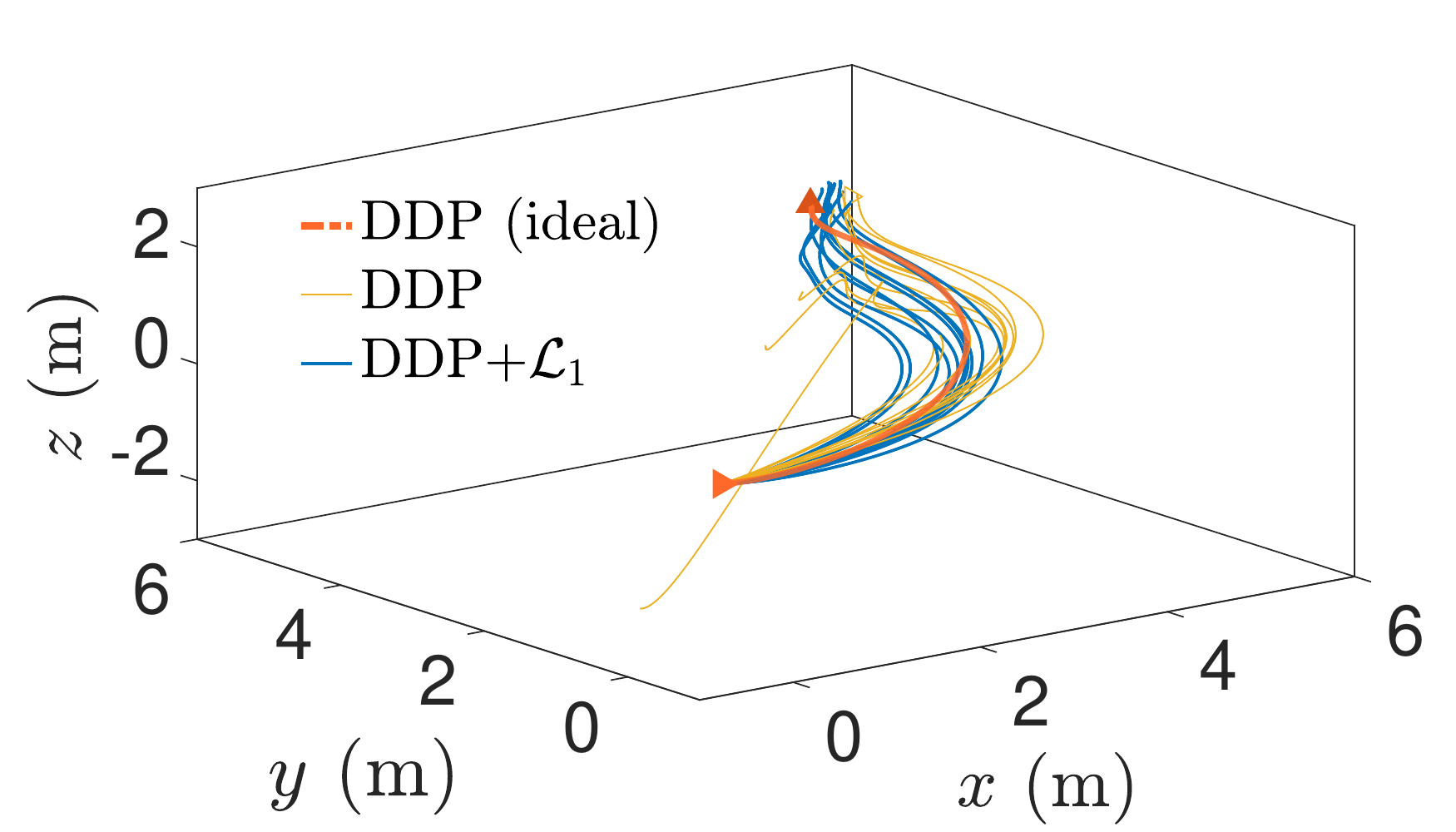}}
   \end{minipage}
   }
   \caption{
  \black{Results under loss of propeller efficiency ((a)), perturbations in quadrotor mass and inertia ((b)), and wind disturbances ((c)). DDP (ideal) denotes the trajectory obtained by applying the policy to the nominal dynamics.}  }
   \label{RT}
   \vspace{-2mm}
\end{figure*}

 \black{We tested the performance of vanilla SAC, DR-SAC, SAC with \lonew augmentation (SAC+$\lone$) and DR-SAC with \lonew augmentation (DR-SAC+$\lone$) under a wide range of perturbations in $m_1$, $m_2$, and under three input gain settings: $\Lambda$ = 1.0, 0.5 and 0.3, while the latter two indicate a loss of control effectiveness by 50\% and 70\%, respectively. For \lonew augmentation design, the parameters in \cref{eq:state-predictor,eq:adaptive_law,eq:adaptive-control-law} were chosen to be $a=10$, $T=0.005$ second and $K=200$, and {\it fixed} across all the tests. The results in terms of the normalized accumulative reward under each test scenario are shown in \cref{fig:pend-rewards}. Note that perturbation in $m_2$ induces unmatched uncertainties that cannot be compensated by the \lonew control law.  As one can see, the performance of vanilla SAC drops dramatically when the perturbations in $m_1$, $m_2$ and $\Lambda$ increase. DR-SAC3 achieved acceptable performance under $\Lambda=0.5$ in general, except when the perturbations in $m_1$ and $m_2$ are near the maximum, which are beyond the perturbations encountered during training of DR-SAC3. However, when the control effectiveness further decreases to 30\% of its nominal value, DR-SAC3's performance degrades significantly, while only slight performance degradation is observed under SAC+$\mathcal{L}_{1}$ and DR-SAC3+$\mathcal{L}_{1}$ 
 when the perturbations increase to the maximum.  It is worth noting that SAC+\lonew and DR-SAC3+\lonew show comparable performance under the tested scenarios. We conjecture that  in the case of larger unmatched uncertainties, DR-SAC3+\lonew will outperform SAC+$\lone$}.

\subsection{\black{3-D quadrotor navigation in simulations}}\label{sec:sub-exp-quadrotor}
The states include quadrotor position ($x,y,z$) and linear velocities ($\dot{x},\dot{y},\dot{z}$) in an inertia frame and the roll, pitch, and yaw angles ($\phi$,$\theta$,$\psi$)
of the quadrotor body frame {with respect to} the inertial
frame, as well as their derivatives. Motor mixing is also included in the dynamics. The inputs are the total thrusts $f_z$ and three moments along three axes ($\tau_{\phi}$,$\tau_{\theta}$,$\tau_{\psi}$) generated by the four propellers. 

The nominal value of the key parameters are set to be $[I_x, I_y, I_z]=[0.082, 0.0845,0.1377]~\mathrm{{\mathrm{kg}m}^2}$ (moment of inertia), $m=4.34\!~\mathrm{kg}$  (quadrotor mass), and $c_{pi}=1$ ($i=1,2,3,4$) (propeller control coefficients). The mission is to control the quadrotor to fly from the origin to the target point $(4,4,2)$. 
To obtain a policy for achieving the mission, we chose to use {trajectory optimization, which, together with model learning, is commonly used for model-based RL} \cite{levine2014learning-trajOpt, pan2014pddp}. We further chose to use differential dynamic programming (DDP) \cite{tassa2012synthesis}, a specific trajectory optimization method. Since our focus is not on the training but on robustifying a pre-trained policy, we use the physics-based dynamic model with the nominal parameter values as the model ``learned'' in the nominal environment. This model is used for computing the DDP policy, and for designing the adaptive augmentation. \black{For computing the DDP policy, we  discretized the nominal dynamics and applied the method in \cite{tassa2012synthesis} with the cost function
 $J = \tilde{x}_{N}^{\top} P_N \tilde{x}_{N}+\sum_{i=0}^{N-1}\left(\tilde{x}_{i}^{\top}P  \tilde{x}_{i}+u_{i}^{\top} Q u_{i}\right),$
where $\tilde{x}_i = x_i - x_{target}$ for $i = 1,...,N$, $N$ is the control horizon, and $P = \textup{diag}(2,2,2,0.1,0.1,0.3,0.1,0.1,0.1,0.1,0.1,0.1)$, $P_N = \textup{diag}(10,10,10,5,5,5,5,5,5,5,5,5)$ and $Q=\textup{diag}(20,4,4,4)$.} For \lonew augmentation design, the parameters in \cref{eq:state-predictor,eq:adaptive_law,eq:adaptive-control-law} were chosen to be $a=10$, $T=0.001$ second and $K=200$, and {\it fixed} across all the tests.

 We tested the performance of the DDP policy with and without \lonew augmentation under three types of dynamic perturbations. The first one is {\bf loss of propeller efficiency}, which mimics the effect of propeller failures, and is simulated by adjusting the control coefficients $c_{pi}$ ($i=1,2,3,4)$. Figure~\ref{fig:engine_failure} shows the  resulting trajectories under ten scenarios, in each of which the control coefficients of two propellers were randomly selected to be in $[0.5,1]$. One can see that \lonew augmentation significantly improved the robustness of the DDP policy, leading to consistent trajectories that are close to the ideal trajectory obtained by applying the policy to the nominal dynamics. 
The second type of dynamic perturbations are the {\bf mass and inertia change}, e.g., to mimic the effect of carrying different packages for a delivery drone. Fig.~\ref{fig:mass_inertia_change} shows the results under ten scenarios with randomly increased mass and inertia through a scale of $[2,5]$. Once again, \lonew augmentation significantly improved the policy robustness, leading to close-to-ideal trajectories. 
The third type of dynamic variations is related to {\bf wind disturbances} in the horizontal plane, which causes disturbance forces in the $x$ and $y$ directions. In each of the ten scenarios, the forces were simulated by stochastic variables with the mean values randomly sampled from $[10,25]$. The results are depicted in Fig.~\ref{fig:wind_disturbance}. \lonew augmentation improved the robustness, but was not able to yield close-to-ideal performance. This is mainly because the wind disturbances induce unmatched disturbances ($\hsigma_{um}(t)$ in \eqref{eq:state-predictor} and \eqref{eq:adaptive_law}), which are not compensated for in the control law \eqref{eq:adaptive-control-law}. \black{Finally, Fig.~\ref{fig:joint-perturb} illustrates the simulation results under {\bf joint perturbations in quadrotor mass, inertia and propeller efficiency and wind disturbances}.}
\begin{figure}[t]
\centering
  \includegraphics[width=0.6\linewidth]{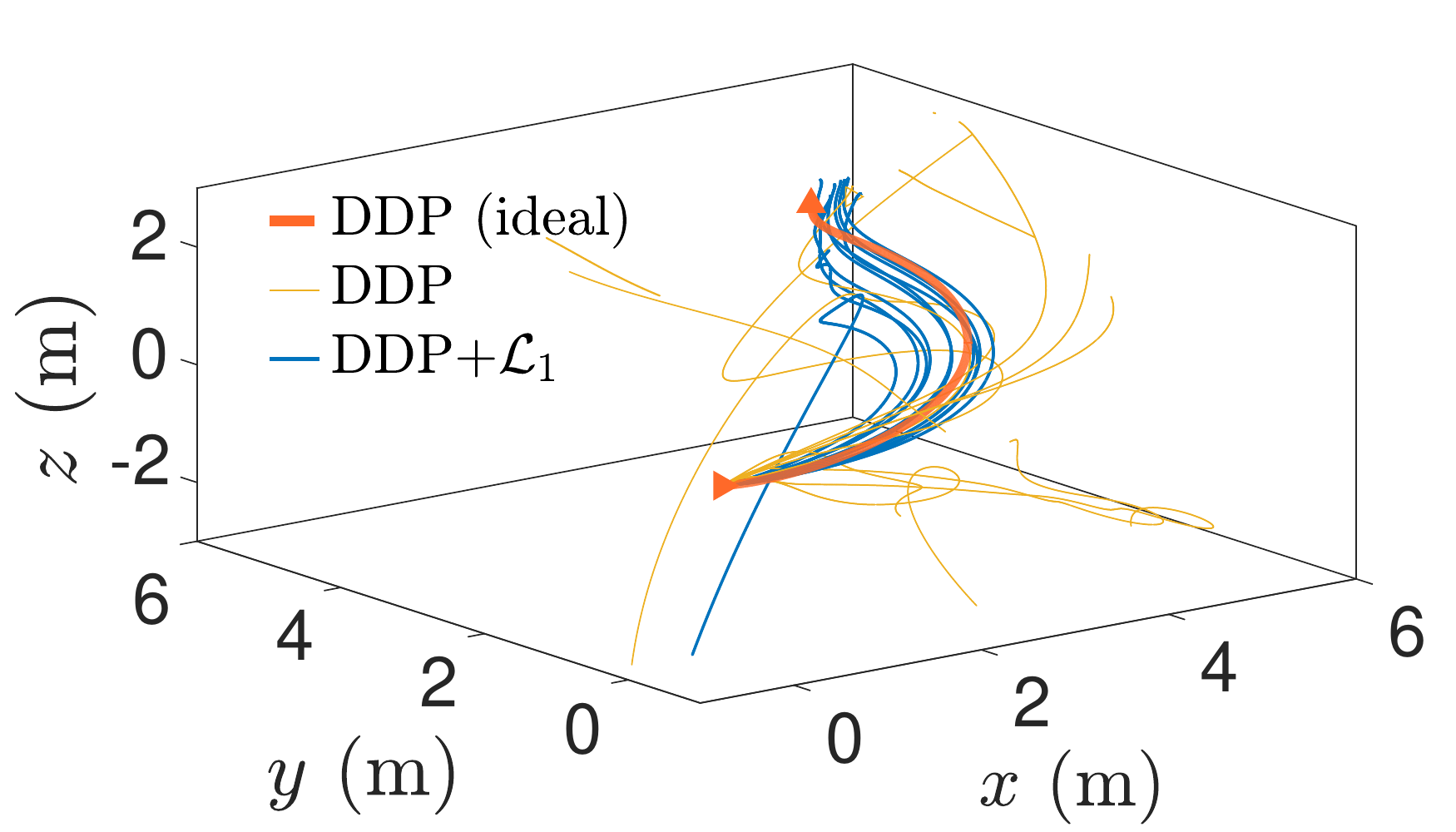}
   \vspace{-2mm}
  \caption{\black{Results under joint perturbations in quadrotor mass, inertia and propeller efficiencies, and wind disturbances. In each of the ten scenarios, each type of perturbation was generated in the same way as was for the results in \cref{fig:engine_failure,fig:mass_inertia_change,fig:wind_disturbance}}.}
  \label{fig:joint-perturb}
  \vspace{-2mm}
\end{figure}
\begin{table*}[htb]
\centering
\caption{Test results under different scenarios}\label{table:test-results-hardware}
\begin{tabular}{|c|c|c|c|c|c|c|}
\hline
\diagbox{Scenario}{Policy}&PILCO&PILCO+$\mathcal{L}_{1}$&SAC&SAC+$\mathcal{L}_{1}$&DR-SAC&DR-SAC+$\mathcal{L}_{1}$ \\
\hline
I: Nominal&{\color{green}\CheckmarkBold}&{\color{green}\CheckmarkBold}&{\color{green}\CheckmarkBold}&{\color{green}\CheckmarkBold}&{\color{green}\CheckmarkBold}&{\color{green}\CheckmarkBold}\\
\hline
II: $\Lambda=0.5$&{\color{red}\XSolidBrush}&{\color{green}\CheckmarkBold}&{\color{red}\XSolidBrush}&{\color{green}\CheckmarkBold}&{\color{green}\CheckmarkBold}&{\color{green}\CheckmarkBold}\\
\hline
III: Added Masses of $270g$ ($100\%$ of $m_2$) &{\color{green}\CheckmarkBold}&{\color{green}\CheckmarkBold}&{\color{green}\CheckmarkBold}&{\color{green}\CheckmarkBold}&{\color{green}\CheckmarkBold}&{\color{green}\CheckmarkBold}\\
\hline
IV: $\Lambda=0.6$ plus Added Mass of $90g$ ($33\%$ of $m_2$) &{\color{red}\XSolidBrush}&{\color{green}\CheckmarkBold}&{\color{red}\XSolidBrush}&{\color{green}\CheckmarkBold}&{\color{green}\CheckmarkBold}&{\color{green}\CheckmarkBold}\\
\hline
V:  $\Lambda=0.5$ plus Added Masses of $450g$ ($167\%$ of $m_2$)&{\color{red}\XSolidBrush}&{\color{green}\CheckmarkBold}&{\color{red}\XSolidBrush}&{\color{red}\XSolidBrush}&{\color{red}\XSolidBrush}&{\color{green}\CheckmarkBold}\\
\hline
VI: Added disturbances with a rubber band&{\color{red}\XSolidBrush}&{\color{green}\CheckmarkBold}&{\color{red}\XSolidBrush}&{\color{green}\CheckmarkBold}&{\color{green}\CheckmarkBold}&{\color{green}\CheckmarkBold}\\
\hline
\end{tabular}
\label{tab:experiments_scenario}
\end{table*}
\subsection{Pendubot swing-up and balance on  real hardware} \label{sec:sub-exp-pendubot-realworld}
We further tested the performance of those policies used in \cref{sec:sub-exp-pendubot-sim} on the hardware setup depicted in \cref{fig:pend_demo}.
\begin{figure}[h] 
\vspace{-1mm}
\centering \includegraphics[height=0.3\linewidth]{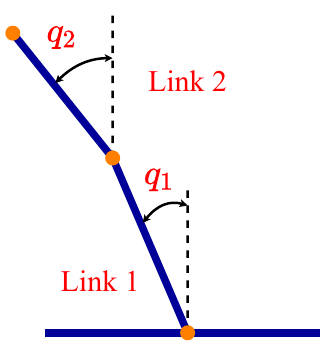}\includegraphics[height=0.51\linewidth]{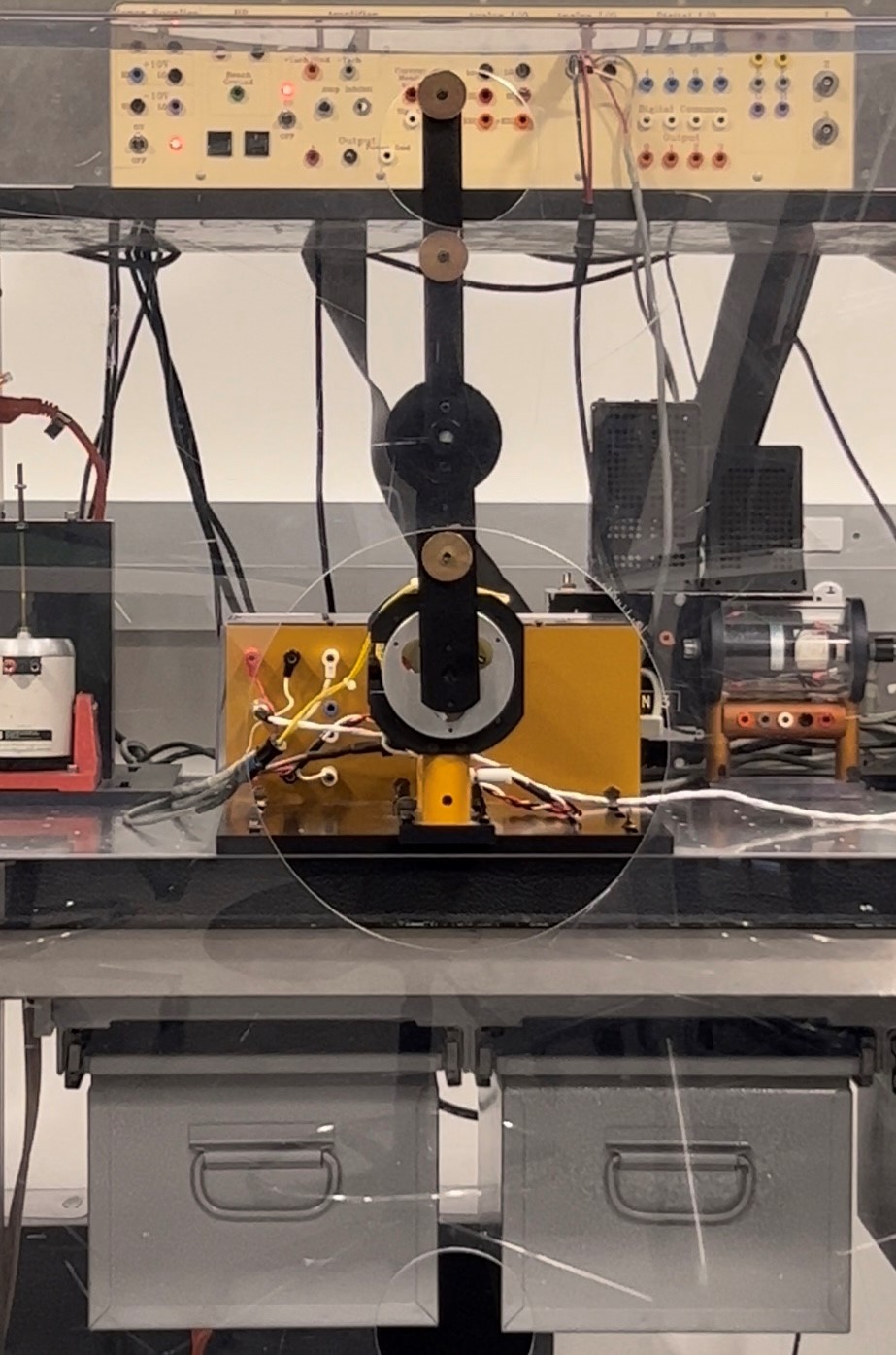} \includegraphics[height=0.51\linewidth]{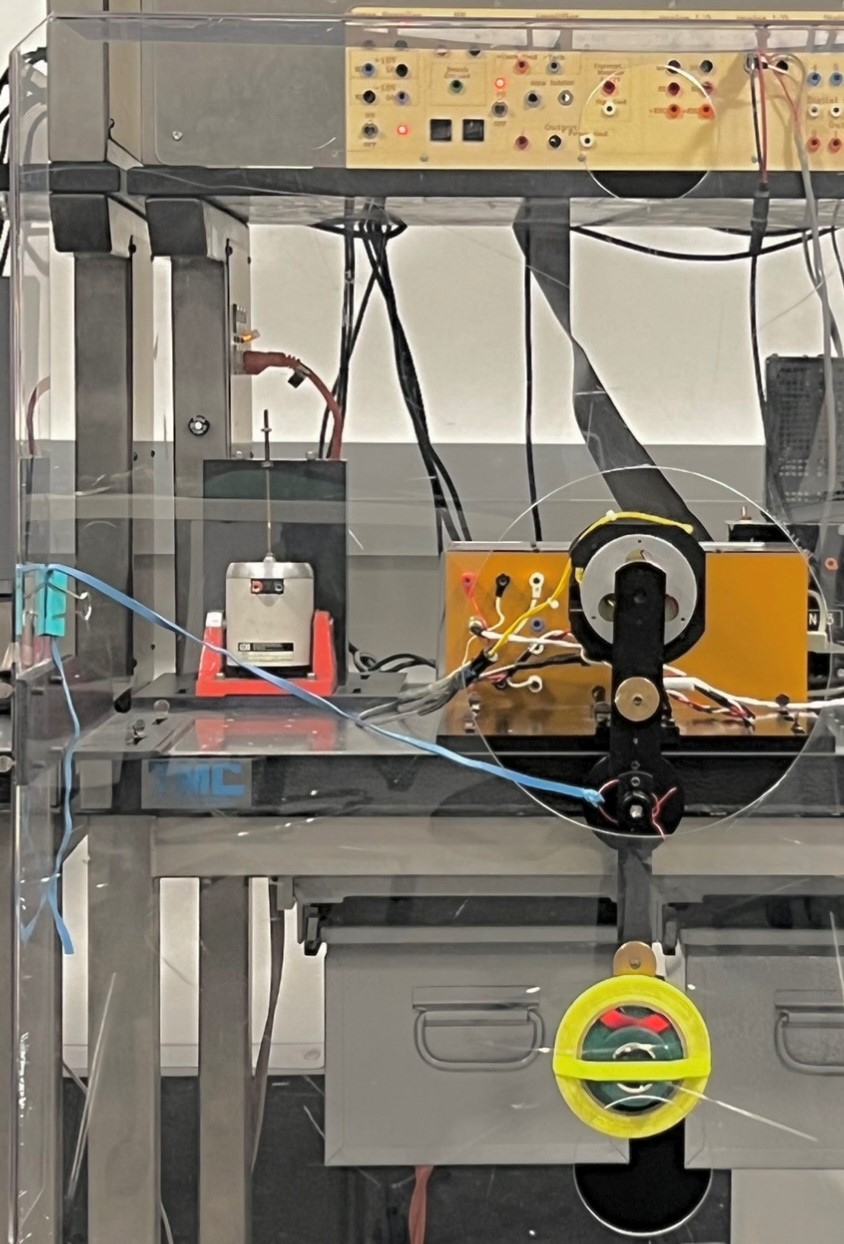}
  \caption{\black{Left: a Pendubot configuration. Middle: stabilization at the upright position. Right: added masses and a rubber band used to induce dynamic variations}}  \label{fig:pend_demo}
  \vspace{-5mm}
\end{figure}
In addition to SAC and DR-SAC, we trained another policy using PILCO \black{with the same reward function defined by \cref{pend_reward_function}}. The ways to introduce dynamic variations include changing the input gain $\Lambda$, adding masses to Link~2, adding disturbance forces using a rubber band and different combinations of these three ways. {For \lonew augmentation design, the parameters in \cref{eq:state-predictor,eq:adaptive_law,eq:adaptive-control-law} were chosen to be $a=150$, $T=0.005$ s and $K=150$ for most of the policies in most of the test scenarios. For DR-SAC in Test I, $K=100$ (corresponding to a lower bandwidth for the low-pass filter) was used to avoid large vibrations at the upright position, due to the fact that DR-SAC has a relatively high gain to attenuate the effect of dynamic variations. }
The test scenarios and results are summarized in \cref{table:test-results-hardware}, where 
{\color{green}\CheckmarkBold} ({\color{red}\XSolidBrush}) indicates a success (failure) in achieving the mission. \black{A video of the experiments is available at \url{https://youtu.be/xZBcsNMYK3Y}}.


As one can see, in the nominal case (i.e., without intentionally introduced dynamic variations), all the policies with and without \lonew augmentation succeeded in achieving the mission. This, to a certain extent, indicates that the \lonew augmentation does not adversely affect the performance of RL policies in the presence of no or minimal dynamic variations. Additionally, \lonew augmentation significantly improves the robustness of PILCO and vanilla SAC, enabling them to succeed under all the tested scenarios except Scenario~V for SAC, due to the extreme dynamic variations induced by the the largest perturbations in input gain and added masses.   
DR-SAC displayed much more robustness compared to vanilla SAC as expected, and only failed under Scenario~V. It's worth noting that \lonew augmentation also further enhanced the robustness to DR-SAC and made it succeed under Scenario~V. In Scenario~VI, a rubber band was attached to the joint connecting the two links to exert a disturbance force. The disturbance force applied by the rubber band changed quite rapidly and peaked when Link 1 reached the upright position. This caused great challenges for the RL policies, as evidenced by the struggling of  PILCO and SAC in the video,  since, by training, these policies are not expected to produce large control inputs near the upright position. Nevertheless, with the help of \lonew compensation,  PILCO and SAC were able to deal with this challenging scenario. 

\section{Conclusion}
This paper presents an add-on scheme to improve the robustness of a reinforcement learning (RL) policy for controlling systems with continuous state and action spaces, by augmenting it with an \lonew adaptive controller (\loneAC) that can quickly estimate and actively compensate for potential dynamic variations during execution of this policy. Our approach is easy to implement and allows for the policy to be trained or computed using almost any RL method (model-free or model-based), either in a simulator or in the real world, as long as a control-affine model to describe the dynamics of the nominal environment is available for the \loneAC~design. Experiments on different systems in both simulations and on real hardware demonstrate  the general applicability of the proposed approach and its capability in improving the robustness of RL policies including those trained robustly, e.g., using domain/dynamics randomization (DR). Future work includes incorporating mechanisms, e.g., based on robust control \cite{zhao2021RALPV-tac,zhao2022tube-rccm-ral}, to mitigate the effect of unmatched disturbance, and model learning to safely and robustly learn the unknown dynamics. 

The proposed approach and existing robust RL methods e.g., based on DR, do not necessarily replace each other. Instead, they can complement each other, as demonstrated by the experimental results in Section~\ref{sec:sub-exp-pendubot-realworld}. As mentioned before, existing robust RL methods \black{aim to find a {\it fixed} policy for all possible realizations of uncertainties, which could be infeasible when the range of  uncertainties is large. On the other hand, the proposed adaptive augmentation approach can deal with significant amount of matched uncertainties by using additional control effort to actively compensate for those, but cannot handle unmatched uncertainties in its current form}. For systems subject to both matched and unmatched disturbances, a compelling solution will be to combine the strength of both by (1) (partially) ignoring matched disturbances in training a policy using existing robust RL methods to reduce conservativeness, and (2) augmenting the trained policy with the proposed \lonew scheme during execution of this policy to compensate for matched disturbances.


\appendix


\subsection{Proof of Lemmas}
Hereafter, the notations 
 $\mbZ_i$ and $\mbZ_1^n$ denote the integer sets $\{i, i+1, i+2, \cdots\}$ and $\{1, 2,\cdots,n\}$, respectively. 
\subsubsection{Proof of Lemma~\ref{lemma:d-bound}} \label{sec:sub-proof-lemma-d-bound}
Note that $\norm{d(t,x)} =   \norm{d(t,x) - d(t,0) + d(t,0)}   \leq l_d \norm{x} +  \norm{d(t,0)}    \leq  l_d \max_{x\in \mcX}\norm{x} + b_d = \theta$, for any $t$ in $[0,\tau]$, 
where the two inequalities hold due to \cref{eq:d-lipschitz-cond} and \cref{eq:d-x0-bound} in Assumption~\ref{assump:lipschitz-bound-fg}. 
Additionally, 
 $    \norm{g(x)(\Lambda-I) u} \leq  \norm{g(x)}\norm{(\Lambda-I)}\norm{u} \leq  \rho$  for any $t$ in $[0,\tau]$.  Therefore, $\norm{\sigma(t,x,u)}\leq  \norm{g(x)(\Lambda-I) u} + \norm{d(t,x)}\leq \rho+\theta$  for any $t$ in $[0,\tau]$. Furthermore,  the dynamics  \eqref{eq:dynamics-perturb-wrt-nom} implies that 
$
   \norm{\dotx(t)}\leq \norm{f(x)+g(x)u} + \norm{g(x)(\Lambda-I) u} +  \norm{d(t,x)} \leq \max_{x\in \mcX}\norm{f(x)}+\max_{x\in \mcX}\norm{g(x)}\max_{u\in \mcU}\norm{u}  +\rho+\theta =  \phi,
$   for any $t$ in $[0,\tau]$. The proof is complete. \qed 

\subsubsection{Proof of Lemma~\ref{lemma:estiamte-error-bound}} \label{sec:sub-proof-lemma-estimate-err-bnd}
From \eqref{eq:dynamics-perturb-wrt-nom} and \eqref{eq:state-predictor}, the prediction error dynamics are obtained as
\begin{equation}
    \dot{\tilx}(t)=\hat{\sigma}(t) - \sigma(t,x,u) -a\tilx(t), \quad \tilx(0) = 0. 
\end{equation}
Therefore, $\hat{\sigma}(t) = 0 $ for any $t\in[0,T)$ according to \eqref{eq:adaptive_law}. Further considering the bounds on $d(t,x)$ and $g(x)(\Lambda-I) u$ in \eqref{eq:d-xdot-bound}, we have
\begin{equation}
    \begin{split}\label{eq:estimation-error-bound-0-T}
      \norm{\hat{\sigma}(t)-\sigma(t,x,u)}   &= \norm{\sigma(t,x,u)}\\
        &\leq \norm{d(t,x)}+\norm{g(x)(\Lambda-I)u}\\
        & \leq \theta + \rho, \quad \forall t\in[0,T).
    \end{split}
\end{equation}
We next derive the bound on $\norm{\hat{\sigma}(t)-\sigma(t,x,u)}$ for $t\in [T, \tau)$. For notation brevity, hereafter, we often write $\sigma(t,x,u)$ as $\sigma(t)$, $d(t,x)$ as $d(t)$, and $g(x)(\Lambda-I)u(t)$ as $h(t)$, i.e., $h(t)\trieq g(x(t))(\Lambda-I)u(t)$. For any $t\in [iT, (i+1)T)\cap[0,\tau]$ ($i\in \mbZ_0$), we have
{
$$
    \tilx(t) = e^{-a(t-iT)}\tilx(iT) + \int_{iT}^t e^{-a(t-\xi)}(\hat{\sigma}(\xi)-\sigma(\xi)))d\xi.
$$}
Since $\tilx(t)$ is continuous, the preceding equation implies 
{\begin{align}
  \tilx((i+1)T) \nonumber =  & ~  e^{-aT}\tilx(iT) \! +\!\int_{iT}^{(i+1)T}\!\! e^{-a((i+1)T-\xi)}d\xi \hd(iT) \nonumber \\
    &  - \int_{iT}^{(i+1)T} e^{-a((i+1)T-\xi)}\sigma(\xi)d\xi, \nonumber \\
     = &~e^{-aT}\tilx(iT) + \frac{1-e^{-aT}}{a}\hat{\sigma}(iT)   \nonumber\\
     &~- \int_{iT}^{(i+1)T} e^{-a((i+1)T-\xi)}\sigma(\xi)d\xi,  \nonumber\\
    =& ~ - \int_{iT}^{(i+1)T} e^{-a((i+1)T-\xi)}\sigma(\xi)d\xi,  \label{eq:tilx-iplus1-Ts}
\end{align}}where the first and last equalities are due to the estimation law \eqref{eq:adaptive_law}. 
%
Since $x(t)$ is continuous,
$d(t,x)$ and $h(t)$ are also continuous, given \cref{assump:nominal-policy,assump:lipschitz-bound-fg}, and the control law \cref{eq:adaptive-control-law}. Therefore, $\sigma(t)$ is continuous.  Furthermore, considering that  $e^{-a((i+1)T-\xi)}$ is always positive, we can apply the first mean value theorem in an element-wise manner\footnote{{Note that the mean value theorem for definite integrals 
 only holds for scalar valued functions.}} to \eqref{eq:tilx-iplus1-Ts}, which leads to 
{\begin{align}
    \tilx((i+1)T) = & -\int_{iT}^{(i+1)T} e^{-a((i+1)T-\xi)}d\xi \bbracket{\sigma_j(\xi^*_{j})}, \nonumber\\
    =& -\frac{1}{a}(1-e^{-aT})\bbracket{\sigma_j(\xi^*_{j})}, \label{eq:xtilde-iplus1-Ts-d-tau}
\end{align}}for some $\xi^*_{j}\in (iT, (i+1)T)$ with $j\in\mbZ_1^n$ and $i\in \mbZ_0$, where $\sigma_{j}(t)$ is the $j$-th element of $\sigma(t)$, and
{$$\bbracket{\sigma_j(\xi_j^*)}\triangleq 
[\sigma_1(\xi_1^*),\cdots, \sigma_n(\xi_n^*)
]^\top.$$}
The adaptive law \eqref{eq:adaptive_law} indicates that for any $t\in[(i+1)T, (i+2)T))\cap[0,\tau]$, we have
$\hat{\sigma}(t) = -\frac{a}{e^{aT}-1}\tilx((i+1)T))$. 
The preceding equality and \eqref{eq:xtilde-iplus1-Ts-d-tau} imply that for any $t\in[(i+1)T, (i+2)T)$ with $i\in \mbZ_0$, there exist $\xi_j^*\in(iT,(i+1)T))\cap[0,\tau]$ ($j\in\mbZ_1^n$) such that 
\begin{equation}\label{eq:hatd-d-tau-star-relation}
    \hat{\sigma}(t) = e^{-aT}\bbracket{\sigma_j(\xi_j^*)}.
\end{equation}
Note that 
{ 
\begin{align}
 &    \norm{\sigma(t)-\bbracket{\sigma_j(\xi_j^*)}} \leq \sqrt{n}\infnorm{\sigma(t)-\bbracket{\sigma_j(\xi_j^*)}} \nonumber\\
    = & \sqrt{n}\abs{\sigma_{\bar j_t}(t)-{\sigma_{\bar j_t}(\xi_{\bar j_t}^*)}} 
    \leq \sqrt{n}\norm{\sigma(t)-{\sigma(\xi_{\bar j_t}^*)}},\label{eq:d-dStar-index-conversion}
\end{align}
}where $\bar j_t=\arg\max_{j\in\mbZ_1^n} \abs{\sigma_j(t)-{\sigma_j(\xi_j^*)}}$.  Similarly,
{\begin{align}
 \norm{\bbracket{\sigma_j(\xi_j^*)}} &\leq \sqrt{n}\infnorm{\bbracket{\sigma_j(\xi_j^*)}} 
      =   \sqrt{n}\abs{{\sigma_{{\bar j}}(\xi_{{\bar j}}^*)}}\nonumber  \\
    & \leq \sqrt{n}\norm{{\sigma(\xi_{{\bar j}}^*)}} \leq \sqrt{n}( \theta+\rho) ,  \label{eq:dStar-index-conversion}
\end{align}}where ${\bar j}=\arg\max_{j\in\mbZ_1^n} \abs{{\sigma_j(\xi_j^*)}}$, and the last inequality is due to \cref{eq:d-xdot-bound}.  
Therefore, for any $t\in[(i+1)T, (i+2)T)\cap [0,\tau]$ ($i\in \mbZ_0$), we have
{\begin{align}
  &  \norm{\sigma(t) -\hat{\sigma}(t)}  = \norm{\sigma(t) - e^{-aT}[\sigma_j(\xi_j^*)]} \nonumber\\ 
\leq & \norm{\sigma(t) -[\sigma_j(\xi_j^*)]}  + (1-e^{-aT})\norm{[\sigma_j(\xi_j^*)]} \nonumber \\ 
\leq & \sqrt{n}\norm{\sigma(t)-{\sigma(\xi_{\bar j_t}^*)}} + (1-e^{-aT})\sqrt{n}(\theta+\rho) \nonumber \\
\leq & \sqrt{n}\norm{d(t)-{d(\xi_{\bar j_t}^*)}}  +\sqrt{n}\norm{h(t)-{h(\xi_{\bar j_t}^*)}} \nonumber \\
& +\sqrt{n}(1-e^{-aT})(\theta+\rho),
\label{eq:d-sigmahat-bound}
\end{align}}for some $\xi_{\bar j_t}^*\in (iT,(i+1)T)\cap[0,\tau]$ and  
$\xi_{{\bar j}}^*\in (iT,(i+1)T)\cap[0,\tau]$, where the equality is due to \eqref{eq:hatd-d-tau-star-relation}, and the second inequality is due to \eqref{eq:d-dStar-index-conversion} and \eqref{eq:dStar-index-conversion}.

The inequality \eqref{eq:d-xdot-bound} implies that
{
\begin{align}
    \norm{x(t)-x(\xi_{\bar j_t}^*)} & \leq \int_{\xi_{\bar j_t}^*}^t \norm{\dot{x}(
   \xi)}d\xi \nonumber\\
   &\leq \int_{\xi_{\bar j_t}^*}^t\phi d\xi = \phi(t-\xi_{\bar j_t}^*)\label{eq:x-diff}, 
\end{align}}which, together with \eqref{eq:d-lipschitz-cond}, indicates that 
{\begin{align}
 &   \norm{d(t,x(t))   - d(\xi_{\bar j_t}^*,x(\xi_{\bar j_t}^*))} \nonumber \\
   \leq & l_d^\prime(t-\xi_{\bar j_t}^*) + l_d\norm{x(t)-x(\xi_{\bar j_t}^*)} \nonumber \\
     \leq &  (l_d^\prime+l_d\phi)(t-\xi_{\bar j_t}^*)
   =  \eta_1 (t-\xi_{\bar j_t}^*) \leq 2\eta_{1} T. \label{eq:d-t-taustar-bound}
\end{align}}
To further derive the bound for $\norm{h(t)   - h(\xi_{\bar j_t}^*)}$, we have to prove that the control input $u(t)$ in \eqref{eq:adaptive-control-law} has bounded rate of variation in $[0,\tau]$. Since the nominal policy $\pi_0(x)$ is Lipschitz continuous with a Lipschitz constant $l_\pi$ in $\mcX$ according to \cref{assump:nominal-policy}, $\uRL(t)=\pi_0(x(t))$, and furthermore, $\norm{\dot x(t)}\leq \phi $ in $[0,\tau]$ according to \cref{eq:d-xdot-bound}, we have that $\uRL(t)$ has a bounded rate of variation, $l_{\pi_0}^\prime = l_\pi\phi$, in $[0,\tau]$. 
We next prove $\uLone(t)$ has a bounded rate of variation in $[0,\tau]$. From \cref{eq:adaptive-control-law}, we have $\uLone(s) = -{K}{(sI+K)}^{-1}\hat{\sigma}_{m}(s)$, which indicates $
        \dot u_\lone(t) = -K\uLone(t)-K\hat{\sigma}_{m}(t)$. As a result, 
\begin{equation}
    \norm{\dot u_\lone(t)} \leq \norm{K}\max_{u\in \mcU}\norm{u}+\norm{K}\norm{\hat{\sigma}_{m}(t)}\label{eq:u-bound1}.
\end{equation}Note that $\hat{\sigma}_{m} = g^{+}(x)\hat{\sigma}(t)$ according to \cref{eq:adaptive_law}, where $g^{+}(x)$ is the pseudo inverse of $g(x)$. Then the inequality~\eqref{eq:u-bound1} can be further written as:
\begin{equation}
    \norm{\dot u_\lone(t)} \leq \norm{K}(\max_{u\in \mcU}\norm{u}+\max_{x\in \mcX}\norm{g^{+}(x)}\norm{\hat{\sigma}(t)})\label{eq:u-bound2}.
\end{equation}
The equations in \eqref{eq:hatd-d-tau-star-relation} and \eqref{eq:dStar-index-conversion} indicate that
$ \norm{\hat{\sigma}(t)} \leq  {e^{-aT}}\sqrt{n}(\theta+\rho)
$, which, together with \eqref{eq:u-bound2}, leads to 
\begin{align}
        \norm{\dot u_\lone(t)} & \leq  \norm{K}(\max_{u\in \mcU}\norm{u}+\sqrt{n}e^{-aT}(\theta+\rho) \max_{x\in \mcX}\norm{g^{+}(x)}) \nonumber\\
        & \trieq l_{\lone}^\prime. \label{eq:def_lu}
\end{align} We have proved that $\uLone(t)$ has a bounded rate of variation, $l_{\lone}^\prime$. As a result, $u(t)$ has a bounded rate of variation, $l_{u} ^\prime= l_{\pi_0}^\prime +l_{\lone}^\prime$, as defined in \cref{eq:l_u-defn}, in $[0,\tau]$, i.e., for any $t_1,t_2$ in $[0,\tau]$, we have \begin{equation}\label{eq:u_diff_ineq}
    \norm{u(t_1)-u(t_2)} \leq l_u^\prime \abs{t_1-t_2}. 
    \end{equation}

Now, we have
\begin{align}
    &   \norm{h(t)   - h(\xi_{\bar j_t}^*)} \nonumber\\
     =& \norm{g(x)(\Lambda-I)u(t) - g(x(\xi_{\bar j_t}^*))(\Lambda-I)u(\xi_{\bar j_t}^*)} \nonumber\\
     \leq &\norm{g(x)(\Lambda-I)u(t) - g(x(\xi_{\bar j_t}^*))(\Lambda-I)u(t)} \nonumber\\
     & +\norm{g(x(\xi_{\bar j_t}^*))(\Lambda-I)u(t) - g(x(\xi_{\bar j_t}^*))(\Lambda-I)u(\xi_{\bar j_t}^*)}\nonumber \\
     \leq&\norm{g(x)- g(x(\xi_{\bar j_t}^*))}\norm{\Lambda-I}\norm{u(t)}\nonumber \\
     & + \norm{g(x(\xi_{\bar j_t}^*))}\norm{\Lambda-I}\norm{u(t)-u(\xi_{\bar j_t}^*)}\nonumber\\
     \leq&l_{g}\norm{x-x(\xi_{\bar j_t}^*)}\norm{\Lambda-I}\norm{u(t)} \nonumber\\
     &+ l_{u} ^\prime\abs{t-\xi_{\bar j_t}^{*}}\norm{\Lambda-I}\norm{g(x(\xi_{\bar j_t}^*))}\label{eq:h-bnd-deriv1}\\
     \leq &(l_{g}\phi\norm{u(t)} + l_{u} ^\prime\norm{g(x(\xi_{\bar j_t}^*))})\norm{\Lambda-I}(t-\xi_{\bar j_t}^{*}) \label{eq:h-bnd-deriv2}\\
     \leq& \eta_{2}(t-\xi_{\bar j_t}^{*}) \leq 2\eta_{2}T, \label{eq:h-bnd}
\end{align}
where \cref{eq:h-bnd-deriv1} is due to  \eqref{eq:g-lipschitz-cond} and \eqref{eq:u_diff_ineq}, \cref{eq:h-bnd-deriv2} is due to \cref{eq:x-diff}, 
$\eta_{1}$ and $\eta_{2}$ are defined in \eqref{eq:eta1-defn}, and the last inequality is due to the fact that $t\in[(i+1)T, (i+2)T)$ and  $\xi_{\bar j_t}^*\in (iT, (i+1)T)$. 
Finally, plugging \cref{eq:d-t-taustar-bound,eq:h-bnd} into \eqref{eq:d-sigmahat-bound} leads to 
{\begin{align}
   & \norm{\sigma(t,x,u)-\hat{\sigma}(t)} \nonumber \\
   \leq&  2\sqrt{n}(\eta_{1}+\eta_{2}) T +\sqrt{n} (1- e^{-aT})(\theta+\rho) = \gamma(T), \label{eq:estimation-error-bound-T-inf}
\end{align}}for any $t\in [T, \tau)$, where the second inequality is due to \eqref{eq:d-xdot-bound}. From \eqref{eq:estimation-error-bound-0-T}
and \eqref{eq:estimation-error-bound-T-inf} we arrive at \eqref{eq:estimation_error_bound}.
Considering that $\mcX$ and $\mcU$ are compact, the constants $\theta$ (defined in \eqref{eq:theta-defn}), $\phi$ (defined in \eqref{eq:phi-defn})  and $\eta$ (defined in \eqref{eq:eta1-defn}) are all finite, the definition of $\gamma(T)$ in \eqref{eq:gammaTs-defn} immediately implies    $\lim_{T\rightarrow 0} \gamma(T) = 0$. \qed

\subsection{Dynamic equations used in numerical experiments}\label{sec:sub-dyn-model-systems-experiments}
\subsubsection{Cart-pole} The dynamics of the cart-pole system, taken from \cite{deisenrothPILCOintro} ,is given by
{\setlength{\arraycolsep}{3pt}
\begin{align*}
    \left[\begin{array}{c}
    \dot{x}\\
    \ddot{x}\\
    \ddot{\theta}\\
    \dot{\theta}
    \end{array}\right]\! =\! &
    \left[\begin{array}{c}
    x_{2} \\
    \frac{2 m_{2} l x_{3}^{2} \sin x_{4}+3 m_{2} g \sin x_{4} \cos x_{4}-4 b x_{2}}{4\left(m_{1}+m_{2}\right)-3 m_{2} \cos ^{2} x_{4}} \\
    \frac{-3 m_{2} l x_{3}^{2} \sin x_{4} \cos x_{4}-6\left(m_{1}+m_{2}\right) g \sin x_{4}+6b x_{2} \cos x_{4}}{4 l\left(m_{1}+m_{2}\right)-3 m_{2} l \cos ^{2} x_{4}} \\
    x_{3}
    \end{array}\right] \\ 
    &+
    \left[\begin{array}{c}
    0\\
    \frac{4}{4\left(m_{1}+m_{2}\right)-3 m_{2} \cos ^{2} x_{4}} \\
    \frac{-6\cos x_{4}}{4 l\left(m_{1}+m_{2}\right)-3 m_{2} l \cos ^{2} x_{4}}\\
    0
    \end{array}\right]u,
\end{align*}}where $[x_{1},x_{2},x_{3},x_{4}]=[x,\dot{x},\dot{\theta},\theta]$, $x$ is the position of the cart along the track, $\theta$ is the pendulum angle measured anti-clockwise from hanging down. The control input $u$ is the force applied to the cart horizontally. Pole length is $l=0.6$ m; mass of cart and pole are $M=m=0.5~\mathrm{kg}$; friction parameter is $b=0.1$ N/m/s and gravity is $g=9.82~\mathrm{m/s^{2}}$.
\subsubsection{Pendubot} The dynamics of this pendubot is detailed in \cite{danthesis}, and the model parameters were obtained via system identification. The system is given by
\begin{equation*}
\begin{array}{l}
{\left[\begin{array}{l}
\ddot{q}_{1} \\
\ddot{q}_{2}
\end{array}\right]=D(q)^{-1} \tau-D(q)^{-1} C(q, \dot{q}) \dot{q}-D(q)^{-1} g(q)},\\
\end{array}
\end{equation*}
where $q_{1}$ is the joint angle measured anti-clockwise between the positive $x$-axis direction and Link~1, and $q_{2}$ is the joint angle measured anti-clockwise between Link~2 and the direction of the central axis of Link~1 towards Link~2. The three matrices are defined as follows:
{\setlength\arraycolsep{3pt}
\begin{equation*}
\begin{aligned}
D(q) &\!=\!\left[\begin{array}{cc}
\theta_{1}+\theta_{2}+2 \theta_{3} \cos q_{2} & \theta_{2}+\theta_{3} \cos q_{2} \\
\theta_{2}+\theta_{3} \cos q_{2} & \theta_{2}
\end{array}\right], \\
C(q, \dot{q}) &\!=\!\left[\begin{array}{cc}
-\theta_{3} \sin \left(q_{2}\right) \dot{q}_{2} & -\theta_{3} \sin \left(q_{2}\right) \dot{q}_{2}-\theta_{3} \sin \left(q_{2}\right) \dot{q}_{1} \\
\theta_{3} \sin \left(q_{2}\right) \dot{q}_{1} & 0
\end{array}\right], \\
g(q) &\!=\!\left[\begin{array}{c}
\theta_{4} g \cos q_{1}+\theta_{5} g \cos \left(q_{1}+q_{2}\right) \\
\theta_{5} g \cos \left(q_{1}+q_{2}\right)
\end{array}\right],
\end{aligned}
\end{equation*}
where $[\theta_{1}, \theta_{2}, \theta_{3}, \theta_{4}, \theta_{5}] = [0.00348, 0.00120, 0.00107, 0.93342,0.28043]$. Furthermore, $\theta_{1}$, $\theta_{2}$ and $\theta_{3}$ are unitless, and the unit of $\theta_{4}$ and $\theta_{5}$ is m.}

\subsubsection{Quadrotor} The dynamics is taken from \cite{bouabdallah2004design-quadrotor}, which use Euler angles. The system is given by
\begin{equation*}
\begin{array}{l}
\ddot{x}=(\cos \phi \cos \theta \cos \psi+\sin \phi \sin \psi) \frac{f_{z}}{m} \\
\ddot{y}=(\cos \phi \sin \theta \sin \psi-\sin \phi \cos \psi) \frac{f_{z}}{m} \\
\ddot{z}=\cos \phi \cos \theta \frac{f_{z}}{m}-g \\
\ddot{\phi}=\dot{\theta} \dot{\psi}\left(\frac{I_{y}-I_{z}}{I_{x}}\right)+\frac{\tau_{\phi}}{I_{x}}  \\
\left.\ddot{\theta}=\dot{\phi} \dot{\psi} (\frac{I_{z}-I_{x}}{I_{y}}\right)+\frac{\tau_{\theta}}{I_{y}}  \\
\ddot{\psi}=\dot{\phi} \dot{\theta}\left(\frac{I_{x}-I_{y}}{I_{z}}\right)+\frac{\tau_{\rho}}{I_{z}}, 
\end{array}
\end{equation*}
where $x, y, z$ represent the position of center of mass in the inertial frame, $\phi, \theta, \psi$ are the roll, pitch and yaw angles. The control inputs are the total thrust $f_{z}$ and torques $\tau_{\phi},\tau_{\theta},\tau_{\psi}$ around the three axes. The total mass of quadrotor is $m = 4.34~\mathrm{kg}$; the moments of inertia around three axes are $I_{x} = 0.082~\mathrm{\mathrm{kg} m^{2}}$, $I_{y} = 0.0845~\mathrm{\mathrm{kg} m^{2}}$ and $I_{z} = 0.1377~\mathrm{\mathrm{kg} m^{2}}$.

\bibliographystyle{ieeetr}
\bibliography{bib/refs}

\end{document}